\definecolor{myblue}{RGB}{31,119,180}
\newcolumntype{C}{>{\columncolor{myblue!20}}c} 
\newcolumntype{B}{>{\columncolor{myblue}\color{white}}c} 
\theoremstyle{definition} 
\newtheorem{theorem}{Theorem}[section]
\newtheorem{lemma}[theorem]{Lemma}
\newtheorem{corollary}[theorem]{Corollary}
\newtheorem{definition}[theorem]{Definition}
\newtheorem{assumption}[theorem]{Assumption}
\title{KCES: Training-Free Defense for Robust Graph Neural Networks via Kernel Complexity}
\author{%
  Yaning Jia, Shenyang Deng, Chiyu Ma, Yaoqing Yang, Soroush Vosoughi\thanks{Corresponding author: \texttt{soroush.vosoughi@dartmouth.edu}} \\
  Dartmouth College \\
  \texttt{\{yaning.jia.gr, shenyang.deng.gr, soroush.vosoughi\}@dartmouth.edu} \\
}
\newif\ifshowcomments
  \newcommand{\yaoqing}[1]{\textcolor{teal}{[Yaoqing:\ #1]}}
  \newcommand{\shenyang}[1]{\textcolor{red}{[Shenyang:\ #1]}}
  \newcommand{\yaning}[1]{\textcolor{blue}{[Yaning:\ #1]}}
  \newcommand{\yaoqing}[1]{}
  \newcommand{\shenyang}[1]{}
  \newcommand{\yaning}[1]{}
\begin{document}
\maketitle

\begin{abstract}


Graph Neural Networks (GNNs) have achieved impressive success across a wide range of graph-based tasks, yet they remain highly vulnerable to small, imperceptible perturbations and adversarial attacks. Although numerous defense methods have been proposed to address these vulnerabilities, many rely on heuristic metrics, overfit to specific attack patterns, and suffer from high computational complexity. In this paper, we propose Kernel Complexity-Based Edge Sanitization (KCES), a training-free, model-agnostic defense framework. KCES leverages Graph Kernel Complexity (GKC), a novel metric derived from the graph's Gram matrix that characterizes GNN generalization via its test error bound. Building on GKC, we define a KC score for each edge, measuring the change in GKC when the edge is removed. Edges with high KC scores, typically introduced by adversarial perturbations, are pruned to mitigate their harmful effects, thereby enhancing GNNs' robustness. KCES can also be seamlessly integrated with existing defense strategies as a plug-and-play module without requiring training. Theoretical analysis and extensive experiments demonstrate that KCES consistently enhances GNN robustness, outperforms state-of-the-art baselines, and amplifies the effectiveness of existing defenses, offering a principled and efficient solution for securing GNNs. 

\end{abstract}

\section{Introduction}
\label{intro}

Graph Neural Networks (GNNs) have emerged as powerful tools for modeling graph-structured data, achieving notable success in diverse domains such as social network analysis~\cite{perozzi2014deepwalk, grover2016node2vec}, recommendation systems~\cite{ying2018graph, he2020lightgcn}, and drug discovery~\cite{gilmer2017neural, yang2019analyzing}. Despite their widespread applicability, GNNs, similar to neural networks operating on Euclidean data~\cite{carlini2017towards, jin2020bert}, are demonstrably vulnerable to adversarial attacks~\cite{zugner2018adversarial, xu2018powerful, bojchevski2019adversarial, dai2018adversarial}. These attacks involve minor perturbations, such as targeted node manipulations~\cite{sun2020adversarial} or strategic edge modifications~\cite{geisler2021robustness}, which severely degrades model performance.

To enhance GNN robustness against adversarial attacks, various defense strategies have emerged. Heuristic purification methods like GNN-Jaccard~\cite{wu2019adversarial} and GNN-SVD~\cite{entezari2020all} refine graph structure by removing suspicious or noisy edges based on similarity or low-rank approximations. Adversarial training approaches, such as RGCN~\cite{zhu2019robust}, expose the model to perturbed graph during training to learn robust representations. Robust architecture designs like ProGNN~\cite{jin2020graph} and GNNGuard~\cite{zhang2020gnnguard} incorporate adaptive graph learning or attention-based edge filtering. However, these methods still exhibit notable limitations: \textit{\textbf{(i)}}~\textit{Reliance on heuristic metrics:} Many methods (e.g., GNN-Jaccard, GNN-SVD) use simple priors like jaccard similarity or low-rank approximations to identify suspicious edges. These heuristics lack a grounding in model generalization analysis, potentially reducing their effectiveness in identifying edges under adversarial perturbations; \textit{\textbf{(ii)}}~\textit{Overfitting to specific attacks}: Adversarial training methods like RGCN often defend against known attack patterns (e.g., Nettack~\cite{zugner2018adversarial}), which can limit their robustness to unseen attack strategies; \textit{\textbf{(iii)}}~\textit{High computational complexity}: approaches like ProGNN involve complex iterative optimization procedures, increasing runtime and limiting scalability.
 \vspace{-1em}
\begin{figure}[H]
  \centering
  \includegraphics[width=0.85\linewidth]{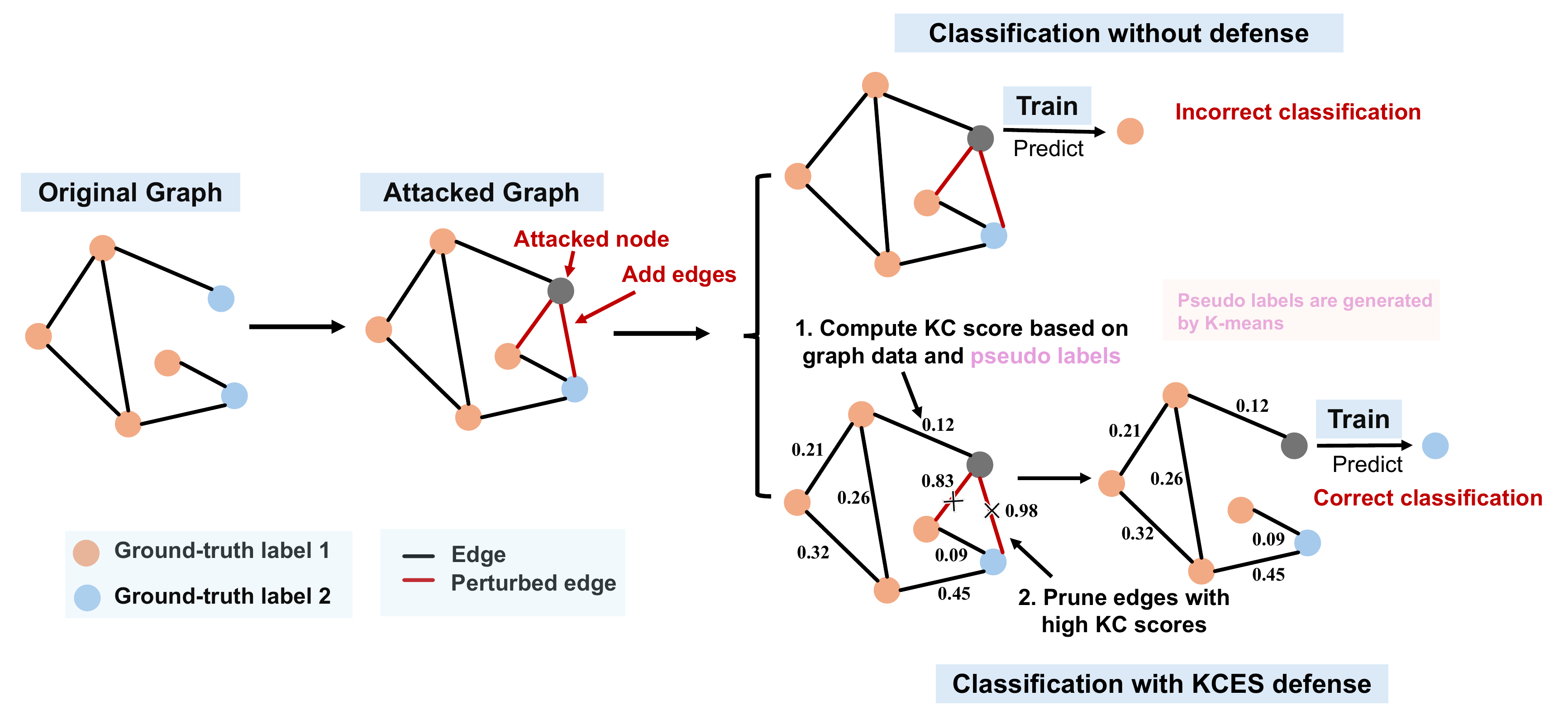}
  \caption{Defense Framework against Adversarial Perturbations with KCES.}
  \label{fig-1:kces_illustration}
\end{figure}
\vspace{-1.3em}


Motivated by the above limitations and recent progress in data-independent generalization~\cite{arora2019fine,nohyun2022data}, we introduce \textbf{Kernel Complexity-Based Edge Sanitization (KCES)}, which operates in two main steps (see Figure~\ref{fig-1:kces_illustration}). First, drawing on research in kernel models, we introduce \textbf{Graph Kernel Complexity (GKC)}, a metric that controls the test error bound of GNN. From GKC, we define the \textbf{Graph Kernel Complexity Gap Score (GKCG score)}, or simply the \textbf{KC score}, for an edge as the change in GKC upon its removal. This score indicates the corresponding change in the test error bound and serves as an effective signal for identifying adversarial edges, which typically exhibit high KC scores. Second, KCES leverages KC scores, which are computed solely from the graph data and pseudo labels, to perform targeted edge sanitization by pruning edges with high scores. This strategy, validated visually (Figure~\ref{fig-1:kces_illustration}) and experimentally (Sections~\ref{attack-score-distribution} and~\ref{edge-sanitization-parameters}), is based on the insight that edges with high KC scores are characteristic of adversarial attacks. By selectively removing these detrimental perturbations while preserving essential benign graph structure, KCES significantly improves GNN robustness.

KCES presents several key advantages over existing defense baselines. It is theoretically grounded under certain assumptions and data-driven, while avoiding reliance on heuristic assumptions about graph properties. This ensures greater flexibility and general applicability across various GNN architectures. KCES quantifies each edge’s intrinsic contribution to GNNs' generalization, which measure the probability of the edge under the adversarial perturbations, which is independent of specific attack strategies, helping prevent overfitting to known adversarial patterns and improving robustness against previously unseen attacks. Furthermore, KCES is a training-free, computationally efficient approach, typically needing only a single pre-training computation, thereby bypassing the high costs of iterative optimization methods.

Overall, our contributions are summarized as follows:
\vspace{-0.5em}
\begin{itemize}[leftmargin=*, labelindent=0pt, labelsep=1.5em, itemsep=2pt, parsep=1pt]
    \item We propose GKC, a novel data-driven, model-agnostic metric defining a key complexity factor in GNN generalization error bounds. From GKC, we derive the KC score to quantify each edge's contribution to test performance on GNNs, effectively identifying adversarially perturbed edges. 
    \item We design KCES, a universal, training-free defense that enhances graph model robustness by selectively pruning likely adversarial edges. This approach is independent of GNN model specifics or attack assumptions.
    \item Extensive experiments show that KCES outperforms popular defense baselines across diverse datasets and various GNN architectures. Moreover, it can be seamlessly integrated as a plug-and-play component to further boost the robustness of existing defense strategies.
\end{itemize}

\section{Related Works}
\label{related_works}

\paragraph{Attacks and Defenses in GNNs} Adversarial attacks in machine learning seek to degrade model performance by introducing subtle input perturbations~\cite{szegedy2013intriguing, madry2017towards, papernot2016limitations, kurakin2018adversarial, moosavi2016deepfool}. When applied to graph-structured data, these attacks compromise structural or semantic integrity through strategies such as structural perturbations~\cite{xu2019topology, chen2018fast, waniek2018hiding, zugner2018adversarial, zügner2018adversarial}, attribute manipulation~\cite{bojchevski2019adversarial, sun2020adversarial, feng2019graph}, and malicious node injection~\cite{sun2020adversarial, zou2021tdgia, tao2021single}. To counter these attacks, various strategies have been developed to enhance the robustness of machine learning models~\cite{szegedy2013intriguing, madry2017towards, papernot2016limitations, kurakin2018adversarial, moosavi2016deepfool, jia2023enhancing, jia2024aligning}. In the graph domain, several defense methods have been proposed. \textit{Robust architecture design} aims to strengthen GNNs by optimizing their structural components, as demonstrated by GNNGuard~\cite{zhang2020gnnguard} and ProGNN~\cite{jin2020graph}. \textit{Graph adversarial training} improves robustness by incorporating adversarial examples during training, as in RGCN~\cite{zhu2019robust}. \textit{Graph data purification} mitigates perturbations by denoising or correcting input data, with methods such as GNN-Jaccard~\cite{wu2019adversarial} and GNN-SVD~\cite{entezari2020all}.  However, existing defenses face several challenges, including \textit{insufficient theoretical grounding}, \textit{limited adaptability to diverse attacks}, and \textit{high computational overhead}. These limitations inspire our method’s design to overcome them.


\paragraph{Gram Matrix Applications} Gram matrices are pivotal for analyzing neural network behavior and optimization. Model-centric applications include investigating how architectures learn target functions~\cite{rahimi2007random} and exploring invariance in MLPs~\cite{tsuchida2018invariance}. From an optimization standpoint, they are crucial for understanding training dynamics, such as gradient descent in over-parameterized networks~\cite{allen2019convergence} and the interplay of learning and stability in two-layer networks~\cite{arora2019fine}. Building on this, training-free data valuation methods employing Gram matrices have quantified individual data point influence in Euclidean data~\cite{nohyun2022data}. Such approaches, however, are less explored for Graph Neural Networks (GNNs). Inspired by recent advances~\cite{arora2019fine, nohyun2022data}, we extend this concept to graph-structured data by conceptualizing a two-layer GNN as a kernel model. Then we propose KCES, a training-free and model-agnostic method to enhance GNNs' robustness against adversarial perturbations.


\section{Preliminaries}
\label{prelim}


In this section, we summarize the key notations used throughout the paper. Let $\mathbb{R}^{d}$ denote the $d$-dimensional Euclidean space, and define $[n] = \{1, 2, \ldots, n\}$. Bold symbols (e.g., $\mathbb{W}$) represent matrices, where $\mathbb{W}_{ij}$ denotes the $(i, j)$-th entry. If $\mathbb{W}$ is symmetric, then $\lambda_{\min}(\mathbb{W})$ denotes its smallest eigenvalue. Capital letters (e.g., $X$) denote vectors, and lowercase letters (e.g., $c$) denote scalars. We use $\|\cdot\|_p$ to denote the $p$-norm for vectors and the corresponding operator norm for matrices, while $\|\cdot\|_F$ refers to the Frobenius norm. A Gaussian distribution with mean $\mu$ and covariance $\Sigma$ is denoted by $\mathcal{N}(\boldsymbol{\mu}, \boldsymbol{\Sigma})$.

\subsection{Theoretical Setup for GNNs.} 
\label{gnn-definition}


We analyze a two-layer GNN as a kernel model on an undirected graph with \(N\) nodes. The graph distribution \(\mathcal{D}_G\), defined over \(\mathbb{R}^{N \times F} \times \mathbb{R}^N\), generates a single graph \(G = (\mathbb{X}, \tilde{\mathbb{A}}, \tilde{\mathbb{D}}, \mathbf{y})\), where each node feature-label pair \((\mathbb{X}_i, y_i) \in \mathbb{R}^F \times \mathbb{R}\) is drawn independently. The adjacency matrix \(\tilde{\mathbb{A}} \in \{0,1\}^{N \times N}\) is fixed, symmetric, and includes self-loops (\(\tilde{\mathbb{A}}_{ii} = 1\)), and the degree matrix \(\tilde{\mathbb{D}}\) is defined by \(\tilde{\mathbb{D}}_{ii} = \sum_j \tilde{\mathbb{A}}_{ij}\). A subset of nodes and their corresponding labels from \(G\), denoted as \(G_{\text{train}}\), is used for training. The forward propagation of a two-layer GNN for node \(i\) is given by:

\vspace{-0.5em}
\begin{equation}
\label{GCN}
f_{\text{GNN}}(\mathbb{X}_i, \tilde{\mathbb{A}}, \tilde{\mathbb{D}}) = \frac{1}{\sqrt{m}} \sum_{r=1}^m a_r \sigma\left( W_r^{\top} \left( \tilde{\mathbb{D}}^{-\frac{1}{2}} \tilde{\mathbb{A}} \tilde{\mathbb{D}}^{-\frac{1}{2}} \mathbb{X} \right)_i \right),
\end{equation}
Here, \(m\) denotes the number of hidden units; \(\mathbb{W} \in \mathbb{R}^{F \times m}\) is the first-layer weight matrix with columns \(W_r \in \mathbb{R}^F\); \(\mathbf{a} = (a_1, \dots, a_m)^{\top} \in \mathbb{R}^m\) is the second-layer weight vector with scalar entries \(a_r\); \(\sigma(\cdot)\) is the activation function (e.g., ReLU).


For the GNN \(f_{\text{GNN}}(\mathbb{X}_i, \tilde{\mathbb{A}}, \tilde{\mathbb{D}})\), we define the training error (empirical risk) on \(G_{\text{train}}\) as:
\begin{equation}
\label{eq-trainloss}
L(\mathbb{W}) = \frac{1}{2} \sum_{i=1}^{N} \left( y_i - f_{\text{GNN}}(\mathbb{X}_i, \tilde{\mathbb{A}}, \tilde{\mathbb{D}})_i \right)^2,
\end{equation}
Here, \(\mathbb{X}_i\) and \(y_i\) are sampled from \(G_{\text{train}}\). The corresponding test error (expected risk) is defined as the expectation of the training error over the graph distribution \(\mathcal{D}_G\), denoted by:
\begin{equation}
\label{eq-testloss}
L_{\mathcal{D}_G}(\mathbb{W}) = \mathbb{E}_{G \sim \mathcal{D}_G} \left[ L(\mathbb{W}) \right].
\end{equation}

\subsection{Graph Kernel Gram Matrix}
\label{gram-matrix}

We introduce the \textbf{Graph Kernel Gram Matrix} (hereafter, the \textbf{Gram matrix}), which captures pairwise node relationships based solely on graph structure and node features. This matrix builds upon classical kernel methods~\cite{xie2017diverse, arora2019fine, tsuchida2018invariance, du2018gradient}, where Gram matrices are widely used to connect model complexity with generalization performance. In our specific GNN setting, \textit{we integrate the graph aggregation operation directly into this kernel}. Consequently, for a graph \(G = (\mathbb{X}, \tilde{\mathbb{A}}, \tilde{\mathbb{D}}, \mathbf{y})\), given the normalized aggregated feature matrix $\tilde{\mathbb{X}} = \tilde{\mathbb{D}}^{-\frac{1}{2}} \tilde{\mathbb{A}} \tilde{\mathbb{D}}^{-\frac{1}{2}} \mathbb{X} \in \mathbb{R}^{N \times F}$, we define the Gram matrix $\mathbb{H}^\infty = \left[ \mathbb{H}_{ij}^\infty \right]_{i,j=1}^{N} \in \mathbb{R}^{N \times N}$, where each entry $\mathbb{H}_{ij}^\infty$ is given by:
\begin{align}
\label{gkgm}
    \mathbb{H}_{ij}^{\infty} = \frac{\tilde{\mathbb{X}}_i^{\top} \tilde{\mathbb{X}}_j \left(\pi - \arccos \left(\tilde{\mathbb{X}}_i^{\top} \tilde{\mathbb{X}}_j\right)\right)}{2 \pi},
\end{align}
Here, $\tilde{\mathbb{X}}_i \in \mathbb{R}^{1 \times F}$ and $\tilde{\mathbb{X}}_j \in \mathbb{R}^{1 \times F}$ represent the $i$-th and $j$-th rows of the matrix $\tilde{\mathbb{X}}$, respectively. The Gram matrix $\mathbb{H}^{\infty}$, computed from these aggregated features, captures the pairwise interactions between nodes. This process effectively establishes a kernel-induced feature space, which forms the basis for computing the Graph Kernel Complexity presented in the subsequent section.

\subsection{Graph Kernel Complexity}
\label{sub-GKC}
Based on the above Gram matrix, we define the \textbf{Graph Kernel Complexity (GKC)} to qualify the test error bound of GNNs, reflecting their generalization capacity. Formally, given the Gram matrix $\mathbb{H}^\infty \in \mathbb{R}^{N \times N}$ and the label vector $\mathbf{y} \in \mathbb{R}^N$, the GKC is defined as:
\begin{equation}
\label{eq-GKC}
    \text{GKC}(\mathbb{H}^{\infty})={\frac{2 \mathbf{y}^{\top} \left(\mathbb{H}^{\infty}\right)^{-1} \mathbf{y}}{N}}.
\end{equation}

Notably, \textit{$\mathbf{y}$ denotes pseudo labels generated by \textit{K-means}~\cite{macqueen1967some} in practice, rather than the ground-truth node labels.} Therefore, GKC is a \textit{model-independent} complexity metric, determined solely by intrinsic data properties. A lower GKC score reflects well-aligned and internally consistent data, which simplifies the learning process and facilitates generalization in GNNs. Theorem~\ref{thm:test-error} formalizes this relationship by showing that smaller GKC values correspond to lower test error, thereby indicating stronger generalization.

\section{GKC-based Generalization Analysis}
\label{GKCB}
This section employs the Gram matrix and GKC to establish generalization bounds for the two-layer GNN (Section~\ref{gnn-definition}). Theorems for training and test errors are presented informally for clarity, with formal statements and proofs in the Appendix.

First, Theorem~\ref{thm:train-error} characterizes the training error dynamics as following:
\begin{theorem}[\textbf{Training error dynamics}]
\label{thm:train-error}
Under Assumption E.2 in Appendix E, after $t$ gradient descent updates with step size $\eta$, the training error satisfies
\begin{equation}
\label{thm:train-error-eq}
L(\mathbb{W}_t) = \left\| \left( \mathbb{I} - \eta \mathbb{H}^\infty \right)^t \mathbf{y} \right\|_2^2 + \varepsilon,
\end{equation}
where \( \mathbb{H}^\infty \) denotes the Gram matrix, \( m \) is the hidden layer width, and \( \varepsilon = \tilde{O}(m^{-1/2}) \) represents an error term dependent on \( t \) and \( m \). The formula shows that the training error decreases with the number of training steps, and the rate of decrease is governed by the Gram matrix. A formal version of the result is provided in Appendix~E.2.1 (Theorem~E.4).
\end{theorem}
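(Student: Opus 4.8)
The plan is to recognize this as a neural-tangent-kernel style overparameterization result, adapted from the two-layer Euclidean analyses of \citet{du2018gradient} and \citet{arora2019fine} to the graph setting. The only structural change is that the aggregation operator $\tilde{\mathbb{D}}^{-1/2}\tilde{\mathbb{A}}\tilde{\mathbb{D}}^{-1/2}$ is folded into the inputs, so that with the normalized aggregated features $\tilde{\mathbb{X}}_i$ (which I take to be unit-norm, per Assumption E.2) the GNN in Eq.~\eqref{GCN} is exactly a two-layer ReLU network on $\{\tilde{\mathbb{X}}_i\}$. Since the empirical risk $L(\mathbb{W})$ depends on $\mathbb{W}$ alone, I would train only the first-layer weights by gradient descent, freeze $\mathbf{a}$ at its random initialization, and use the standard antisymmetric initialization so the prediction vector $\mathbf{u}(0)=0$ exactly; then the residual $\mathbf{r}(t):=\mathbf{y}-\mathbf{u}(t)$ satisfies $\mathbf{r}(0)=\mathbf{y}$ and $L(\mathbb{W}_t)=\tfrac12\|\mathbf{r}(t)\|_2^2$, up to the scaling convention in the stated bound.

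First I would derive the discrete residual recursion. Differentiating $L$ with respect to each $W_r$ and taking one gradient step yields, after a first-order expansion in $\eta$, $\mathbf{u}(t+1)-\mathbf{u}(t)=\eta\,\mathbb{H}(t)\,\mathbf{r}(t)+\xi(t)$, where $\mathbb{H}(t)$ is the time-$t$ empirical Gram matrix $\mathbb{H}_{ij}(t)=\tfrac1m\,\tilde{\mathbb{X}}_i^\top\tilde{\mathbb{X}}_j\sum_{r=1}^m\sigma'(W_r(t)^\top\tilde{\mathbb{X}}_i)\sigma'(W_r(t)^\top\tilde{\mathbb{X}}_j)$ and $\xi(t)$ collects the second-order-in-$\eta$ terms together with the contribution of ReLU activation-pattern flips. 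Hence $\mathbf{r}(t+1)=(\mathbb{I}-\eta\mathbb{H}(t))\mathbf{r}(t)-\xi(t)$.

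Next, two approximation steps reduce $\mathbb{H}(t)$ to the limiting kernel $\mathbb{H}^\infty$. (a) At initialization, $\mathbb{H}(0)$ is an average of $m$ i.i.d.\ terms whose expectation over $W_r(0)\sim\mathcal{N}(0,\mathbb{I})$ is exactly the closed form $\mathbb{H}^\infty_{ij}$ of Eq.~\eqref{gkgm}, via the arc-cosine identity $\mathbb{E}[\sigma'(w^\top u)\sigma'(w^\top v)]=\tfrac{\pi-\arccos(u^\top v)}{2\pi}$ for unit vectors; a matrix Bernstein/Hoeffding bound then gives $\|\mathbb{H}(0)-\mathbb{H}^\infty\|_2=\tilde{O}(m^{-1/2})$ with high probability. (b) The key and hardest step is showing the Gram matrix stays essentially frozen along the trajectory: assuming $\lambda_{\min}(\mathbb{H}^\infty)=\lambda_0>0$ and $m$ sufficiently large (Assumption E.2), an induction on $t$ couples residual decay to weight drift — if $\|\mathbf{r}(s)\|$ has contracted geometrically for all $s\le t$, the cumulative movement $\|W_r(t)-W_r(0)\|_2$ is $O(m^{-1/2})$, so few neurons change activation pattern and $\|\mathbb{H}(t)-\mathbb{H}(0)\|_2=\tilde{O}(m^{-1/2})$, which in turn guarantees the contraction persists at step $t+1$. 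I expect this coupling argument, and in particular bounding the activation-flip and second-order terms $\xi(t)$ uniformly in $t$ so that they sum to $\tilde{O}(m^{-1/2})$, to be the main obstacle.

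Finally I would unroll the recursion. Writing $\mathbb{H}(s)=\mathbb{H}^\infty+E(s)$ with $\|E(s)\|_2=\tilde{O}(m^{-1/2})$ and $\|\xi(s)\|$ controlled as above, a telescoping comparison between $\prod_{s=0}^{t-1}(\mathbb{I}-\eta\mathbb{H}(s))$ and $(\mathbb{I}-\eta\mathbb{H}^\infty)^t$ — using that $\|\mathbb{I}-\eta\mathbb{H}^\infty\|_2\le 1$ is a contraction for admissible $\eta$, so the accumulated perturbations are not amplified — yields $\mathbf{r}(t)=(\mathbb{I}-\eta\mathbb{H}^\infty)^t\mathbf{y}+\tilde{O}(m^{-1/2})$. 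Squaring and absorbing every $\tilde{O}(m^{-1/2})$ contribution into a single term $\varepsilon$ gives $L(\mathbb{W}_t)=\|(\mathbb{I}-\eta\mathbb{H}^\infty)^t\mathbf{y}\|_2^2+\varepsilon$ with $\varepsilon=\tilde{O}(m^{-1/2})$, as claimed.
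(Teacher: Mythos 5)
Your proposal shares the paper's opening move---folding the normalized aggregation operator into the inputs so that Eq.~\eqref{GCN} becomes a plain two-layer ReLU network on $\{\tilde{\mathbb{X}}_i\}$---but from there the routes diverge. The paper's proof of Theorem~\ref{train_theo_fo} is a pure reduction: it verifies the hypotheses of the Euclidean result of \cite{arora2019fine} (restated as Lemma~\ref{ref_train_theo}), namely identically distributed and independent aggregated pairs (Lemma~\ref{lem:iid-aggregated}), unit-norm inputs (Assumption~\ref{assum}), and non-degeneracy of the Gram matrix (Lemma~\ref{lem:non-degenerate}), and then invokes that lemma as a black box. You instead re-derive the NTK machinery itself: the residual recursion, concentration of $\mathbb{H}(0)$ around $\mathbb{H}^\infty$ via the arc-cosine identity, the weight-drift/activation-flip coupling that keeps the kernel frozen, and the unrolling against $(\mathbb{I}-\eta\mathbb{H}^\infty)^t$. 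This is exactly the internal argument of \cite{du2018gradient,arora2019fine}, so your sketch is sound, and it even has a conceptual advantage: it makes plain that the training-error dynamics are a statement conditional on the realized data and need no i.i.d. assumption, whereas the paper formally routes through its i.i.d. lemma. The cost is that you spend nearly all your effort re-proving a cited result rather than on what is new in the graph setting.

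That points to the one genuine gap. Your induction is predicated on $\lambda_{\min}(\mathbb{H}^\infty)=\lambda_0>0$, which you attribute to Assumption E.2; but Assumption~\ref{assum} does not assert this. It only provides a full-rank $\tilde{\mathbb{A}}$, row-normalized $\tilde{\mathbb{X}}$, and pairwise non-parallel raw features $\mathbb{X}_i$. Positive-definiteness of the Gram matrix built on \emph{aggregated} features must be derived, and this is precisely the graph-specific content of the paper's argument: the invertible operator $\mathbb{T}=\tilde{\mathbb{D}}^{-1/2}\tilde{\mathbb{A}}\tilde{\mathbb{D}}^{-1/2}$ preserves pairwise non-parallelism of the rows (Lemma~\ref{lem:non-parallel}), and then Theorem 3.1 of \cite{du2018gradient} (Lemma~\ref{theo:non-parallel}) yields $\lambda_0>0$ (Lemma~\ref{lem:non-degenerate}). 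Without this step your geometric contraction never gets off the ground, since aggregation could in principle collapse two nodes' features onto the same ray and make $\mathbb{H}^\infty$ singular. The gap is fixable in a paragraph, but it is the only part of the proof not already in the Euclidean literature, so it must be supplied explicitly. A minor additional point: the informal statement bounds $L(\mathbb{W}_t)$ by $\bigl\|(\mathbb{I}-\eta\mathbb{H}^\infty)^t\mathbf{y}\bigr\|_2^2+\varepsilon$ while the formal Theorem~\ref{train_theo_fo} controls the un-squared norm up to $\pm\epsilon$; your final "squaring" step silently converts one into the other, which is legitimate only because $\bigl\|(\mathbb{I}-\eta\mathbb{H}^\infty)^t\mathbf{y}\bigr\|_2\le\|\mathbf{y}\|_2\le\sqrt{N}$ keeps the cross term of order $\sqrt{N}\epsilon$, and that dependence on $N$ should be stated rather than absorbed without comment.
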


Building on training error analysis, Theorem~\ref{thm:test-error} establishes a generalization bound via GKC:
\begin{theorem}[\textbf{Test error bound}]
\label{thm:test-error}
Under Assumption E.2 in Appendix E, for sufficiently large hidden layer width $m$ and iteration count $t$, the following holds with probability at least $1 - \delta$:
\begin{equation}
\label{thm:test-error-eq}
 L_{\mathcal{D}_G}\!\left(\mathbb{W}_t\right)
 \;\le\;
 \sqrt{\text{GKC}\!\left(\mathbb{H}^{\infty}\right)}
 \;+\;
 O\!\left(\sqrt{\frac{\log \frac{N}{\lambda_0 \delta}}{N}}\right),
\end{equation}
Here, \( \text{GKC}\!\left(\mathbb{H}^{\infty}\right) \) denotes the Graph Kernel Complexity (Section~\ref{sub-GKC}), \( \delta \in (0, 1) \) is the confidence level, \( N \) is the number of nodes, and \( \lambda_0 \) is a lower bound on \( \lambda_{\min}(\mathbb{H}^{\infty}) \) (Lemma~E.3). The generalization bound on the GNN test error, as derived from the formula, is primarily influenced by the data-dependent GKC term. A smaller GKC leads to a tighter bound, indicating improved generalization under standard GNN training regimes. The formal statement and proof are presented in Appendix~E.2.1 (Theorem~E.5).
\end{theorem}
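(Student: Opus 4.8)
The plan is to follow the neural-tangent-kernel generalization strategy of Arora et al., adapted to the graph setting in which the aggregation operator $\tilde{\mathbb{D}}^{-1/2}\tilde{\mathbb{A}}\tilde{\mathbb{D}}^{-1/2}$ has been folded into the kernel. The leading term of the target bound, $\sqrt{\text{GKC}(\mathbb{H}^\infty)} = \sqrt{2\mathbf{y}^\top(\mathbb{H}^\infty)^{-1}\mathbf{y}/N}$, is exactly the Rademacher complexity of the class of networks reachable by gradient descent from the random initialization, so the argument reduces to (i) locating the trained network $f_{\mathbb{W}_t}$ inside a small-norm function class and (ii) bounding that class's complexity. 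First I would invoke the standard Rademacher generalization theorem for a bounded $1$-Lipschitz surrogate loss (to which the squared loss reduces on the relevant bounded domain of predictions and labels): with probability at least $1-\delta$, the population risk $L_{\mathcal{D}_G}(\mathbb{W}_t)$ is at most the empirical risk plus twice the empirical Rademacher complexity of the reachable class plus an $O(\sqrt{\log(1/\delta)/N})$ confidence term.

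The core kernel step is to control how far gradient descent moves the first-layer weights from initialization. Using Theorem~\ref{thm:train-error}, the prediction residual contracts as $(\mathbb{I}-\eta\mathbb{H}^\infty)^t\mathbf{y}$, and summing the per-step gradient contributions and relating them to the Gram matrix yields $\|\mathbb{W}_t-\mathbb{W}_0\|_F^2 \le \mathbf{y}^\top(\mathbb{H}^\infty)^{-1}\mathbf{y} + \tilde{O}(m^{-1/2})$ for sufficiently large $m$ and $t$. This is precisely where $\mathbf{y}^\top(\mathbb{H}^\infty)^{-1}\mathbf{y}$ enters the analysis: it is the squared RKHS norm of the kernel-regression interpolant, and the positive-definiteness guarantee $\lambda_{\min}(\mathbb{H}^\infty)\ge\lambda_0$ from Lemma~E.3 both renders this norm finite and supplies the geometric contraction underlying the training dynamics.

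With $B := \sqrt{\mathbf{y}^\top(\mathbb{H}^\infty)^{-1}\mathbf{y}}$ in hand, I would define the function class of two-layer GNNs whose first-layer weights lie in a Frobenius ball of radius $\approx B$ about $\mathbb{W}_0$, show $f_{\mathbb{W}_t}$ lies in it with high probability, and bound its empirical Rademacher complexity. Evaluated on the aggregated rows $\tilde{\mathbb{X}}_i$, this complexity scales as $B/\sqrt{2N}$ up to $\tilde{O}(m^{-1/2})$ corrections, which doubled produces exactly $\sqrt{2\mathbf{y}^\top(\mathbb{H}^\infty)^{-1}\mathbf{y}/N} = \sqrt{\text{GKC}(\mathbb{H}^\infty)}$. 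Combining this with a near-zero empirical risk (again from Theorem~\ref{thm:train-error}) and the confidence term assembles the claimed inequality; the width and iteration thresholds are what feed the $\lambda_0$ dependence inside the logarithm of the $O(\sqrt{\log(N/\lambda_0\delta)/N})$ term.

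The hardest part will be establishing near-constancy of the empirical Gram matrix along the \emph{entire} optimization trajectory in the graph setting: the argument needs $\mathbb{H}(\mathbb{W}_s)\approx\mathbb{H}^\infty$ uniformly for $s\le t$, yet the aggregation step correlates the effective inputs $\tilde{\mathbb{X}}_i$, breaking the i.i.d.\ structure that standard concentration bounds for the initialization kernel and for ReLU activation-pattern stability exploit. I would handle this by treating the aggregated rows $\tilde{\mathbb{X}}_i$ as the data points and re-deriving the perturbation and concentration estimates in terms of $\|\tilde{\mathbb{X}}\|$-type quantities that the normalization $\tilde{\mathbb{D}}^{-1/2}\tilde{\mathbb{A}}\tilde{\mathbb{D}}^{-1/2}$ keeps bounded, so that the polynomial width requirement $m=\mathrm{poly}(N,1/\lambda_0,1/\delta)$ absorbs the extra graph-induced dependencies.
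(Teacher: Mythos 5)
Your proposal is sound in outline, but it takes a genuinely different route from the paper. The paper does \emph{not} re-derive the NTK generalization machinery at all: its proof of Theorem E.5 is a pure reduction. It (i) proves that the aggregated pairs $(\tilde{\mathbb{X}}_i, y_i)$ are i.i.d.\ draws (Lemma E.10), (ii) proves the Gram matrix built from aggregated features is non-degenerate, $\lambda_{\min}(\mathbb{H}^\infty)\ge\lambda_0>0$, via invertibility of $\tilde{\mathbb{D}}^{-1/2}\tilde{\mathbb{A}}\tilde{\mathbb{D}}^{-1/2}$ preserving pairwise non-parallelism of rows (Lemmas E.8, E.9), and then (iii) relabels $\tilde{\mathbb{X}}_i\mapsto X_i$, $\mathcal{D}_G\mapsto\mathcal{D}$, $N\mapsto n$ and invokes the Arora et al.\ test-error bound (Lemma E.12) as a black box. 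What you propose instead is to reconstruct the inside of that black box — the Rademacher bound for the Frobenius-ball class, the weight-movement bound $\|\mathbb{W}_t-\mathbb{W}_0\|_F\lesssim\sqrt{\mathbf{y}^\top(\mathbb{H}^\infty)^{-1}\mathbf{y}}$, and kernel stability along the trajectory — with the aggregated rows as data. The paper's route buys brevity, but it stands or falls on Lemma E.10, whose independence claim is the weak link: aggregated features $\tilde{\mathbb{X}}_i=\sum_j A_{ij}\mathbb{X}_j$ and $\tilde{\mathbb{X}}_k=\sum_j A_{kj}\mathbb{X}_j$ share summands whenever nodes $i$ and $k$ have overlapping neighborhoods, so they are correlated, and the label $y_i$ is paired with a feature vector that depends on other nodes' draws; the paper's reduction quietly relabels this dependence away. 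Your plan confronts exactly this issue head-on (you correctly flag it as the hardest step), so carried to completion it would be more work but more self-contained and more honest about the graph-induced dependence than the paper's own argument; conversely, if you were willing to accept the paper's i.i.d.\ lemma, your entire third and fourth paragraphs could be replaced by a citation, which is precisely what the paper does.
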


\section{Kernel Complexity-Based Edge Sanitization for Robustness}
Motivated by the established connection between GKC and the test error of GNNs, we propose \textbf{Kernel Complexity-Based Edge Sanitization} (\textbf{KCES}), a training-free and model-agnostic defense framework that evaluates the importance of graph edges and strategically prunes them to improve GNN robustness against adversarial perturbations. This framework comprises two key components: \textbf{\textit{(i) Edge KC Score Estimation}} and \textbf{\textit{(ii) KC-Based Edge Sanitization}}. The first component assigns a kernel-based score to each edge, quantifying its structural importance. The second component utilizes these scores to selectively remove potentially harmful edges, thereby mitigating the impact of adversarial attacks.

\subsection{Edge KC Score Estimation} 
\label{KCscoredef}
To quantify the importance of each edge, we introduce the \textbf{Graph Kernel Complexity Gap Score}, hereafter referred to as the \textbf{KC score}. Specifically, for an edge $e_{ij}$ connecting the $i$-th and $j$-th nodes in the graph, the KC score is defined as:
\begin{equation}
\label{kc_def}
  KC(i, j) = \left| \text{GKC}\left(\mathbb{H}^{\infty}\right) - \text{GKC}\left(\mathbb{H}^{\infty}_{-(i,j)}\right) \right|,
\end{equation}
Here, $\mathbb{H}^{\infty}$ denotes the Gram matrix of the original graph, and $\mathbb{H}^{\infty}_{-(i,j)}$ corresponds to the Gram matrix of the graph $G_{-(i,j)}$, obtained by removing edge $e_{ij}$. Both matrices are computed using pseudo labels $\mathbf{y}$ generated by \textit{K-means}; the details of generating $\mathbf{y}$ refers to Appendix B. The KC score \(KC(i,j)\) quantifies the change in GKC resulting from the removal of edge \(e_{ij}\). This score governs the upper bound of the interval within which the test error may vary, as stated in Corollary~\ref{edge_theo} (For a formal version and proof of the corollary, see Appendix E.2.2 and E.4.3).

\begin{corollary}[Edge‑specific test error bound]
\label{edge_theo}
Let the graph $G_{-(i,j)}$ be obtained by deleting edge $e_{ij}$, and let $\mathbb{W}_{t}$ denote the GNN parameters after $t$ gradient descent updates on the modified graph. Under the same assumptions as Theorem~\ref{thm:test-error}, and for sufficiently large hidden width $m$ and iteration count $t$, the following holds with probability at least $1 - \delta$:
\begin{equation}
    L_{\mathcal{D}_{G_{-(i,j)}}}\left(\mathbb{W}_{t}\right) \le \sqrt{\text{GKC}\left(\mathbb{H}^{\infty}\right)} + \sqrt{KC(i,j)} + O\left(\sqrt{\frac{\log \frac{N}{\lambda_0 \delta}}{N}}\right),
\end{equation}
where $N$ denotes the number of nodes; $\lambda_0$ is the lower bound on $\lambda_{\min}(\mathbb{H}^{\infty})$, as specified in Assumption E.2; and $\mathcal{D}_{G_{-(i,j)}}$ represents the data distribution for graph $G_{-(i,j)}$, which are generated by taking graphs from $\mathcal{D}_G$ and removing edge $e_{ij}$. Corollary~\ref{edge_theo} establishes a theoretical link between the KC score of an edge \( e_{ij} \) and the GNN's expected test error on the graph \( G_{-(i,j)} \) (i.e., after removing \( e_{ij} \)), thereby quantifying the edge's contribution to the overall GKC. Adversarially perturbed edges, being theoretically detrimental to GNN generalization, are expected to exhibit higher KC scores. This is empirically supported by the results in Section~\ref{attack-score-distribution}, which shows attacks increase the prevalence of such high-impact edges. Thus, the KC score serves as an effective indicator of these adversarial structural perturbations, guiding the strategic removal of detrimental edges to enhance GNN robustness.
\end{corollary}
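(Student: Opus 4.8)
The plan is to derive the corollary directly from Theorem~\ref{thm:test-error} by treating the edge-deleted graph $G_{-(i,j)}$ as a fresh instance of the generalization setting and then controlling the resulting complexity term through the definition of the KC score. The only genuinely new ingredient beyond Theorem~\ref{thm:test-error} is an elementary algebraic comparison between $\text{GKC}(\mathbb{H}^{\infty}_{-(i,j)})$ and $\text{GKC}(\mathbb{H}^{\infty})$, so the bulk of the argument is bookkeeping rather than new analysis.

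First I would apply Theorem~\ref{thm:test-error} verbatim to the graph $G_{-(i,j)}$, whose Gram matrix is $\mathbb{H}^{\infty}_{-(i,j)}$ and whose distribution is $\mathcal{D}_{G_{-(i,j)}}$. Provided the hypotheses of Theorem~\ref{thm:test-error} continue to hold for this graph, this yields, with probability at least $1-\delta$,
\[
L_{\mathcal{D}_{G_{-(i,j)}}}(\mathbb{W}_{t}) \;\le\; \sqrt{\text{GKC}(\mathbb{H}^{\infty}_{-(i,j)})} \;+\; O\!\left(\sqrt{\tfrac{\log\frac{N}{\lambda_0\delta}}{N}}\right).
\]
Next, the definition of the KC score, $KC(i,j)=\bigl|\text{GKC}(\mathbb{H}^{\infty})-\text{GKC}(\mathbb{H}^{\infty}_{-(i,j)})\bigr|$, gives in particular $\text{GKC}(\mathbb{H}^{\infty}_{-(i,j)})\le \text{GKC}(\mathbb{H}^{\infty})+KC(i,j)$. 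Since $\mathbb{H}^{\infty}$ and $\mathbb{H}^{\infty}_{-(i,j)}$ are positive definite (their smallest eigenvalues are bounded below by $\lambda_0>0$), both GKC values are nonnegative, so I may invoke the subadditivity of the square root, $\sqrt{x+y}\le\sqrt{x}+\sqrt{y}$ for $x,y\ge 0$, to obtain $\sqrt{\text{GKC}(\mathbb{H}^{\infty}_{-(i,j)})}\le \sqrt{\text{GKC}(\mathbb{H}^{\infty})}+\sqrt{KC(i,j)}$. Substituting this into the displayed bound produces exactly the claimed inequality.

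The main obstacle is verifying that the hypotheses of Theorem~\ref{thm:test-error} — in particular Assumption~E.2 and the eigenvalue lower bound $\lambda_{\min}(\mathbb{H}^{\infty}_{-(i,j)})\ge\lambda_0$ that appears inside the residual term — survive the edge deletion. Removing $e_{ij}$ perturbs $\tilde{\mathbb{A}}$ and $\tilde{\mathbb{D}}$, which through the degree renormalization perturbs the aggregated features $\tilde{\mathbb{X}}=\tilde{\mathbb{D}}^{-1/2}\tilde{\mathbb{A}}\tilde{\mathbb{D}}^{-1/2}\mathbb{X}$ not only at nodes $i,j$ but across their neighborhoods, and hence perturbs the entries of the Gram matrix; so its smallest eigenvalue could in principle fall below the original $\lambda_0$. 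I would handle this either by strengthening Assumption~E.2 to posit a uniform lower bound $\lambda_0$ over the family of single-edge deletions $\{G_{-(i,j)}\}$, or by a perturbation argument bounding $\|\mathbb{H}^{\infty}-\mathbb{H}^{\infty}_{-(i,j)}\|_2$ in terms of the local degree structure near $i$ and $j$ (via Weyl's inequality) and absorbing the resulting eigenvalue shift into the constant hidden in the $O(\cdot)$ term. With the uniform eigenvalue guarantee in place, the width and iteration thresholds required by Theorem~\ref{thm:test-error} transfer to $G_{-(i,j)}$, and the remaining steps reduce to the elementary manipulations above.
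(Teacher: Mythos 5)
Your proposal is correct and matches the paper's own proof essentially step for step: apply the test-error theorem to $G_{-(i,j)}$ with Gram matrix $\mathbb{H}^{\infty}_{-(i,j)}$, then use $\sqrt{a}\le\sqrt{b}+\sqrt{c}$ with $a=\mathrm{GKC}(\mathbb{H}^{\infty}_{-(i,j)})$, $b=\mathrm{GKC}(\mathbb{H}^{\infty})$, $c=KC(i,j)=|a-b|$. The eigenvalue obstacle you flag is resolved in the paper exactly by your first suggested option: Lemma~E.3 establishes an ``edge-robustly non-degenerate'' condition, taking $\lambda_0$ as a uniform lower bound on $\lambda_{\min}$ over the original graph and all single-edge deletions.
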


\subsection{KC-based Edge Sanitization}
Building on estimated KC scores, we can use them to sanitize graphs. Since edges with high scores tend to negatively impact model performance, we can remove edges with higher KC scores to ensure GNN performance, enhancing its robustness. Specific \textbf{KCES} refer to the following Algorithm~\ref{alg:kces}.

\begin{algorithm}[htbp] 
 \caption{KCES: Kernel Complexity-Based Edge Sanitization} 
 \label{alg:kces} 
 \textbf{Input:} Graph $G = (V, E)$; features $\mathbb{X}$; pruning ratio $\alpha \in [0,1]$  \\
 \textbf{Output:} Sanitized graph $G' = (V, E')$ 
 \begin{algorithmic}[1] 
  \STATE $\mathbf{\hat{y}} \leftarrow \operatorname{K-means}(\tilde{\mathbb{A}}, \mathbb{X})$ \hfill \small{\textit{// Generate pseudo labels for nodes via \textit{K-means}}} 
 \STATE $\mathbb{H}^{\infty} \leftarrow \operatorname{GKGM}(G, \mathbb{X})$ 
    \hfill \small{\textit{// Compute original Gram matrix, see Eq.~\ref{gkgm}}} 
   \vspace{0.2em}
 \STATE $\text{GKC}(\mathbb{H}^{\infty}) \leftarrow \frac{2 \mathbf{\hat y}^\top (\mathbb{H}^{\infty})^{-1} \mathbf{\hat y}}{N}$ 
    \hfill \small{\textit{// Compute original GKC, see Eq.~\ref{eq-GKC}}} 
   \vspace{0.2em}
 \STATE $\operatorname{KC} \leftarrow [\,]$ 
    \hfill \small{\textit{// Initialize a KC list for edge scores}}
    \vspace{0.3em}
 \FOR{each edge $e_{ij} \in E$} 
    \STATE $G_{-(i,j)} \leftarrow G \setminus \{e_{ij}\}$ 
        \hfill \small{\textit{// Create a modified graph excluding edge $e_{ij}$}} 
    \vspace{0.2em}
    \STATE $\mathbb{H}^{\infty}_{-(i,j)} \leftarrow \operatorname{GKGM}(G_{-(i,j)}, \mathbb{X})$ 
        \hfill \small{\textit{// Compute Gram matrix for the modified graph, see Eq.~\ref{gkgm}}} 
    \vspace{0.2em}
    \STATE $\text{GKC}(\mathbb{H}^{\infty}_{-(i,j)})\leftarrow \frac{2 \mathbf{\hat y}^\top (\mathbb{H}^{\infty}_{-(i,j)})^{-1} \mathbf{\hat y}}{N}$ 
        \hfill \small{\textit{// Compute GKC for the modified graph, see Eq.~\ref{eq-GKC}}} 
    \vspace{0.2em}
    \STATE $\operatorname{KC}[e_{ij}] \leftarrow \left| \text{GKC}(\mathbb{H}^{\infty} - \text{GKC}(\mathbb{H}^{\infty}_{-(i,j)}) \right|$ 
        \hfill \small{\textit{// Compute KC score for $e_{ij}$, see Eq.~\ref{kc_def}}}
  \ENDFOR 
  \vspace{0.3em}
 \STATE $(e_{(1)}, e_{(2)}, \dots, e_{(|E|)}) \leftarrow \operatorname{SortEdgesByScore}(\operatorname{KC}, \text{descending})$ 
    \hfill \small{\textit{// Sort edges by KC score}}
  \vspace{0.2em}
 \STATE $k \leftarrow \lceil \alpha \cdot |E| \rceil$ 
    \hfill \small{\textit{// Number of edges to remove}} 
    \vspace{0.2em}
 \STATE $E' \leftarrow E \setminus \{e_{(1)}, \dots, e_{(k)}\}$ 
    \hfill \small{\textit{// Prune the top-$k$ KC score edges}} 
    \vspace{0.2em}
 \STATE \textbf{Return} $G' = (V, E')$ \hfill \small{// \textit{Return Graph with pruned edge set}}
 \end{algorithmic} 
\end{algorithm}

\vspace{-0.6em}
\section{Experiments}

This section details four comprehensive experiments designed to validate our theoretical analyses and assess the KCES method's effectiveness against adversarial attacks. \textbf{Experiment~1} examines KC score distributions in clean and adversarial graphs, explaining the effect of adversarial perturbations on KC score. \textbf{Experiment~2} shows connection between pruning strategies and model generalization, empirically demonstrating edge KC score is highly linked with GNN performance. \textbf{Experiment~3} assesses KCES's robustness across diverse adversarial settings, demonstrating state-of-the-art results. Finally, \textbf{Experiment~4} showcases KCES's plug-and-play compatibility with existing defense baselines, highlighting its capacity to further improve their robustness.


\subsection{Overall Setup}
\label{setup}
\paragraph{Dataset} We evaluate the robustness and scalability of the proposed KCES method using five benchmark datasets spaning three distinct scales: \textbf{(1)} small-scale graphs (fewer than 10,000 edges): \textit{Cora}\cite{sen2008collective}, \textit{Citeseer} \cite{sen2008collective}, and \textit{Polblogs\cite{adamic2005political}}; \textbf{(2)} medium-scale graph (10,000-100,000 edges): \textit{Pubmed} \cite{yang2016revisiting}; and \textbf{(3)} large-scale graph (exceeding 100,000 edges): \textit{Flickr}\cite{liu2009social}.  Detailed dataset statistics are provided in Appendix A.
\vspace{-0.5em}
\paragraph{Attack strategies} We utilize five representative attack methods to evaluate robustness, categorized into three types. \textit{These strategies involve perturbing the graph data before GNN training, with defense methods subsequently applied to mitigate their impact.} The categories are: \textbf{\textit{(i) Non-targeted Attack}}, which degrade overall GNN performance by perturbing the entire graph structure, including \textit{Metattack}~\cite{zügner2018adversarial}, \textit{MINMAX}~\cite{xu2019topology}, and \textit{DICE}~\cite{waniek2018hiding}; \textbf{\textit{(ii) Targeted Attack}}, which focus on misleading GNN predictions for specific nodes, for which we employ the widely-used \textit{Nettack}~\cite{zugner2018adversarial}; and \textbf{\textit{(iii)~Random Attack}}, which simulate noise by randomly adding or removing edges at, termed \textit{Random}.
\vspace{-0.5em}
\paragraph{Baseline methods}
We evaluate the effectiveness of KCES by comparing it against several state-of-the-art defense methods from two complementary perspectives. First, we consider a set of widely adopted and robust GNN architectures, including GCN~\cite{xu2019topology}, GAT~\cite{velivckovic2017graph}, and RGCN~\cite{zhu2019robust}. Second, we benchmark KCES against established defense strategies specifically designed to improve the robustness of GNN in adversarial settings. These include ProGNN~\cite{jin2020graph}, GNN-Jaccard~\cite{wu2019adversarial}, GNN-SVD~\cite{entezari2020all}, and GNNGuard~\cite{zhang2020gnnguard}, all of which represent mainstream approaches for defending against both non-targeted and targeted attacks. 
\paragraph{Implementation details} We use test accuracy for node classification as the primary evaluation metric in the following experiments. Further implementation details are provided in Appendix~C.
 
\subsection{Impact of Adversarial Perturbations on Kernel Complexity Gap Scores}
\label{attack-score-distribution}

This experiment investigates the \textbf{\textit{relationship between KC score distribution and adversarial perturbations}}. Based on Corollary~\ref{edge_theo}, adversarially perturbed edges are usually detrimental to GNN generalization, which may increase GKC theoretically, resulting in a larger test error bound (expressed as large KC scores). To verify this, we compare KC score distributions across three settings: . To verify this, we compare KC score distributions across three settings: \textbf{(1)} the original clean graph, \textbf{(2)} an adversarially perturbed graph (using \textit{Metattack}), and \textbf{(3)} the perturbed graph after pruning its high-KC score edges. For a fair comparison, an equal number of edges are randomly sampled from each graph type, and their KC scores are normalized to the $[0,1]$ range. Kernel Density Estimation (KDE)~\cite{parzen1962estimation} is used to visualize these smoothed distributions. Experiments are conducted on \textit{Cora} and \textit{Pubmed} datasets (representing varying graph scales) using a GCN. We sample 1,000 edges from \textit{Cora} and 10,000 from \textit{Pubmed}, proportional to their sizes. Pruning ratios are set to 0.25 for \textit{Cora} and 0.75 for \textit{Pubmed}, based on findings in Section~\ref{edge-sanitization-parameters}.
\begin{figure}[htbp]
  \centering
  \includegraphics[width=0.95\linewidth]{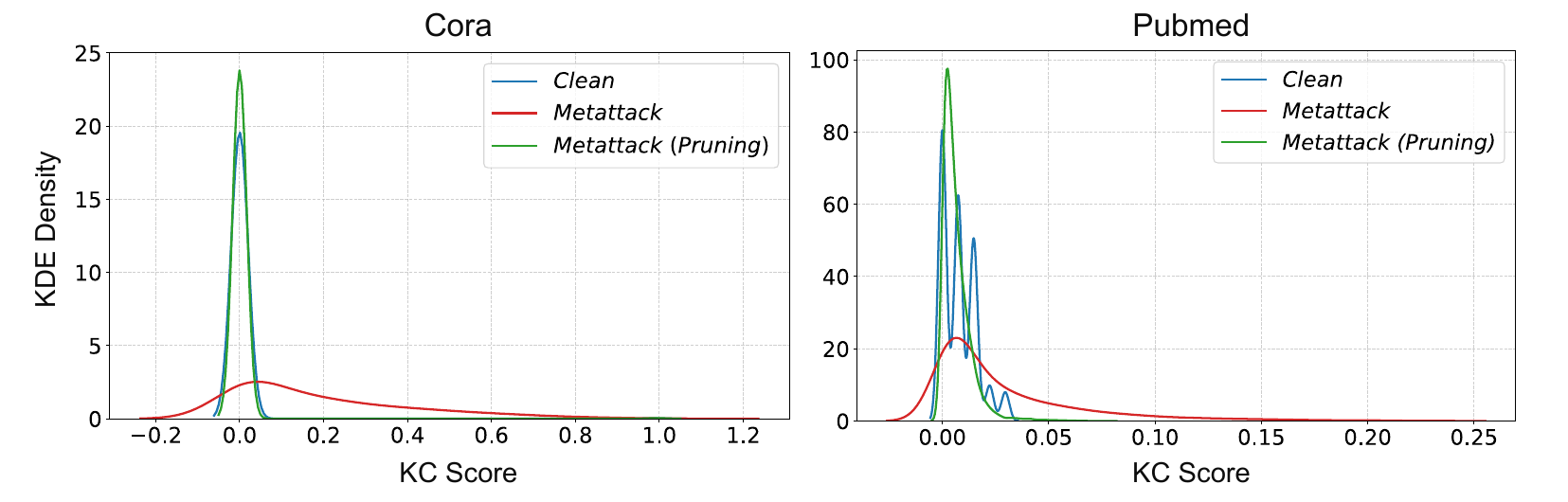}
  \vspace{-0.5em}
  \caption{Visualization of KC Score Distributions Across \textit{Clean}, \textit{Metattack}, and \textit{Pruned Graphs}.}
  \label{fig:score-distribution}
\end{figure}

The results, depicted in Figure~\ref{fig:score-distribution}, demonstrate that adversarial perturbations markedly increase edge KC scores, thereby empirically supporting Corollary~\ref{edge_theo}. Specifically, the KC score distribution for clean graphs is highly concentrated around zero, reflecting a low prevalence of edges with high KC scores. In contrast, \textit{Metattack} induces a significant shift in the distribution towards higher KC values and introduces a long tail. This suggests an increase in structurally redundant or uninformative edges, which adversely affect GNN performance as edges with high KC scores contribute less to effective learning. The proposed pruning strategy effectively counteracts this shift by removing these high-KC score edges, thereby restoring the distribution to a form that closely resembles that of the clean graph. These findings align with our theoretical analysis in Section~\ref{GKCB}, which posits that edges with high KC scores are significant contributors to graph complexity and thus detrimental. The subsequent section will further explore the impact of edges with high and low KC scores on GNN performance.

\subsection{Impact of Edge Sanitization on Model Performance}
\label{edge-sanitization-parameters}
Section~\ref{attack-score-distribution} and Corollary~\ref{edge_theo} establish a link between each edge's influence on GNN performance and its corresponding test error, as quantified by the KC score. Edges with lower KC scores (Low-KC), such as unperturbed ones, are typically associated with lower test error and improved GNN performance. In contrast, edges with higher KC scores (High-KC), often resulting from adversarial perturbations, tend to increase test error and degrade performance. To empirically validate this insight, we prune edges based on their KC scores and assess the impact on model performance. Three distinct pruning strategies are evaluated: \textbf{\textit{(i) High-KC Pruning}}, removing edges with the highest KC scores; \textbf{\textit{(ii) Low-KC Pruning}}, removing edges with the lowest KC scores; and \textbf{\textit{(iii) Random Pruning}}, removing edges randomly as a baseline. Each strategy is tested with pruning ratios ranging from 0.05 to 0.95, in 0.05 increments. The primary evaluations utilize adversarially perturbed versions of the \textit{Cora} and \textit{Pubmed} datasets (attacked via \textit{Metattack}), with a GCN architecture employed consistently. Results from corresponding experiments on clean graphs are detailed in Appendix D.1.

The result presented in Figure~\ref{fig:ES-adversarial} highlights the significant positive contribution of Low-KC edges to GNN performance, in contrast to the demonstrably negative impact of High-KC edges. This finding, reinforced by analogous results on clean graphs (Appendix D.1), validates High-KC Pruning as an effective method for removing detrimental edges in adversarial scenarios. It thus represents a robust strategy for improving GNN generalization and bolstering robustness against adversarial perturbations. Specifically, under \textit{Metattack}, High-KC Pruning consistently sustains or improves GNN performance. In stark contrast, Low-KC Pruning generally leads to a decline in accuracy, often performing worse than Random Pruning. Notably, under adversarial conditions, both Low-KC and Random Pruning can yield performance improvements at high pruning ratios (beyond 0.5). This phenomenon is likely attributable to the extensive removal of adversarial edges, where the benefit of eliminating numerous detrimental perturbations outweighs the cost of losing some benign edges. High-KC Pruning, however, does not exhibit this phenomenon, suggesting its superior precision in targeting harmful edges while preserving essential ones. Overall, the efficacy of these pruning strategies follows the order: \textbf{\textit{High-KC Pruning > Random Pruning > Low-KC Pruning}}. These empirical results align well with our theoretical analyses in Section~\ref{GKCB}.

\begin{figure}[htbp]
  \label{fig:ES-adversarial}
  \centering
  \includegraphics[width=0.9\linewidth]{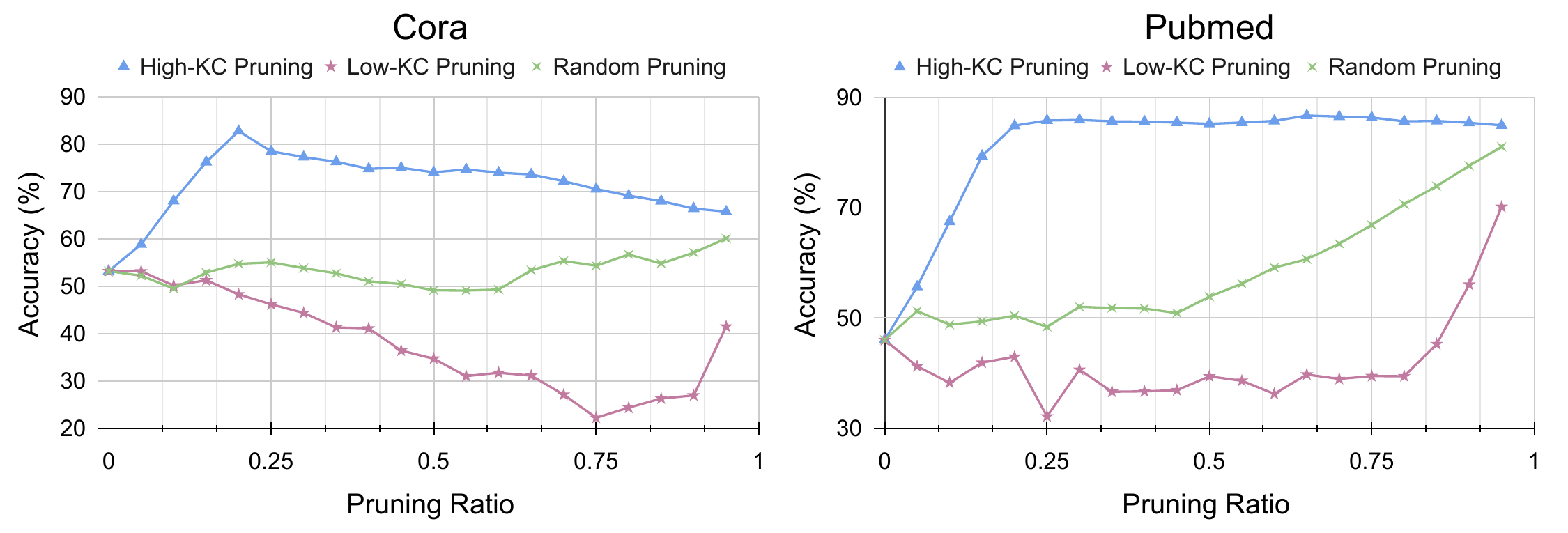}
  \vspace{-0.5em}
  \caption{Comparing Edge Pruning Strategies via KC Scores in Adversarial Settings.}
\end{figure}
\vspace{-1em}

\subsection{Defense against adversarial attacks}
In this section, we evaluate that KCES can improve GNNs' robustness against various adversarial attacks. The experimental setup, including datasets, attack methods, and defense baselines, follows the configuration described in Section~\ref{setup}. For non-targeted and random attacks, we allocate 25\% of the graph’s edges as the perturbation budget. For the targeted attack (\textit{Nettack}), we select nodes with degrees greater than 10 and perturb their connected edges. We implement KCES on GCN and compare its robustness against other baseline methods. All results are reported as percentages, and \text{N/A} indicates that the method is not applicable to the corresponding attack setting.

Table~\ref{tab-1:robustness} reports the defense performance of various methods across five datasets under random, untargeted, and targeted attacks. Overall, our proposed method, KCES, consistently outperforms baselines and achieves state-of-the-art robustness across all attack settings. Notably, KCES enables the model to match, and occasionally exceed its performance on clean setting, even under adversarial perturbations. On the \textit{Flickr} dataset, we observe that several baselines perform better under adversarial conditions than on clean data. This is likely due to the presence of numerous noisy or irrelevant edges in the large-scale graph, which hinder model performance. This pattern is consistent with prior findings~\citep{dai2022towards, dong2023towards}, suggesting that pruning-based defense methods, including ours, can reduce the impact of such noise, effectively acting as regularization and enhancing both robustness and generalization. Overall, these results confirm the effectiveness of KCES over baselines.

\begin{table}[t]
\caption{Defense performance (Accuracy $\pm$ Std) under targeted and untargeted adversarial attacks.}
\label{tab-1:robustness}
\centering
\resizebox{\textwidth}{!}{%
\begin{tabular}{llcccccccc}
\toprule
\textbf{Dataset} & \textbf{Attack} & \textbf{GCN} & \textbf{GAT} & \textbf{RGCN} & \textbf{ProGNN} & \textbf{GNN-SVD} & \textbf{GNN-Jaccard} & \textbf{GNNGuard} & \textbf{KCES (Ours)} \\
\midrule
\multirow{6}{*}{\makecell[l]{\textit{Polblogs}\\ (\textit{small})}}  
 & \textit{Clean} & $95.71 \pm 0.79$ & $95.40 \pm 0.41$ & $95.29 \pm 0.52$ & $95.60 \pm 0.46$ & $94.68 \pm 0.88$ & \text{N/A} & \text{N/A} & $\textbf{96.01} \pm \textbf{0.18}$ \\
 & \textit{Random} & $81.29 \pm 0.61$ & $85.01 \pm 0.02$ & $82.23 \pm 0.76$ & $86.50 \pm 0.12$ & $88.34 \pm 0.83$ & \text{N/A} & \text{N/A} & $\textbf{89.46} \pm \textbf{0.77}$ \\
 & \textit{Nettack} & $92.59 \pm 0.73$ & $85.79 \pm 0.23$ & $93.15 \pm 0.70$ & $95.20 \pm 0.59$ & $95.37 \pm 0.57$ & \text{N/A} & \text{N/A} & $\textbf{96.48} \pm \textbf{0.82}$ \\
 & \textit{DICE} & $71.47 \pm 0.98$ & $77.10 \pm 0.87$ & $71.16 \pm 0.36$ & $74.74 \pm 0.18$ & $76.48 \pm 0.73$ & \text{N/A} & \text{N/A} & $\textbf{93.04} \pm \textbf{0.06}$ \\
 & \textit{MINMAX} & $70.14 \pm 0.54$ & $68.04 \pm 0.89$ & $82.10 \pm 0.42$ & $60.73 \pm 0.37$ & $62.47 \pm 0.72$ & \text{N/A} & \text{N/A} & $\textbf{94.17} \pm \textbf{0.47}$ \\
 & \textit{Metattack} & $63.09 \pm 0.12$ & $63.60 \pm 0.20$ & $62.67 \pm 0.13$ & $65.23 \pm 0.24$ & $81.28 \pm 0.58$ & \text{N/A} & \text{N/A} & $\textbf{82.72} \pm \textbf{0.32}$ \\
\midrule
\multirow{6}{*}{\makecell[l]{\textit{Cora}\\ \textit{(small)}}}  %
 & \textit{Clean} & $83.65 \pm 0.57$ & $83.90 \pm 0.15$ & $82.89 \pm 0.99$ & $\textbf{85.16} \pm \textbf{0.77}$ & $78.16 \pm 0.07$ & $82.69 \pm 0.74$ & $78.87 \pm 0.77$ & $84.04 \pm 0.42$ \\
 & \textit{Random} & $77.57 \pm 0.38$ & $79.23 \pm 0.75$ & $74.55 \pm 0.13$ & $79.58 \pm 0.32$ & $78.87 \pm 0.36$ & $77.36 \pm 0.93$ & $77.11 \pm 0.57$ & $\textbf{79.67} \pm \textbf{0.69}$ \\
 & \textit{Nettack} & $57.83 \pm 0.78$ & $58.23 \pm 0.08$ & $59.04 \pm 0.51$ & $69.67 \pm 0.68$ & $74.70 \pm 1.21$ & $76.69 \pm 0.64$ & $62.65 \pm 0.03$ & $\textbf{82.24} \pm \textbf{0.43}$ \\
 & \textit{DICE} & $76.16 \pm 0.69$ & $77.06 \pm 0.31$ & $73.59 \pm 0.91$ & $75.95 \pm 0.40$ & $72.68 \pm 0.72$ & $77.11 \pm 0.22$ & $75.50 \pm 0.26$ & $\textbf{82.59} \pm \textbf{0.09}$ \\
 & \textit{MINMAX} & $60.66 \pm 0.23$ & $61.26 \pm 0.73$ & $59.80 \pm 0.75$ & $64.43 \pm 0.23$ & $59.45 \pm 0.10$ & $72.23 \pm 0.69$ & $70.57 \pm 0.95$ & $\textbf{78.42} \pm \textbf{0.47}$ \\
 & \textit{Metattack} & $53.12 \pm 0.83$ & $58.35 \pm 0.22$ & $51.35 \pm 0.72$ & $63.37 \pm 0.16$ & $61.92 \pm 0.28$ & $75.28 \pm 0.19$ & $70.77 \pm 0.97$ & $\textbf{82.99} \pm \textbf{0.08}$ \\
\midrule
\multirow{6}{*}{\makecell[l]{\textit{Citeseer}\\ (\textit{small})}}  
 & \textit{Clean} & $72.51 \pm 0.61$ & $72.69 \pm 0.55$ & $71.97 \pm 0.60$ & $71.74 \pm 0.59$ & $69.60 \pm 0.56$ & $72.98 \pm 0.06$ & $71.03 \pm 0.19$ & $\textbf{73.02} \pm \textbf{0.93}$ \\
 & \textit{Random} & $70.38 \pm 0.83$ & $69.31 \pm 0.95$ & $67.06 \pm 0.26$ & $72.36 \pm 0.19$ & $67.59 \pm 0.15$ & $71.21 \pm 0.03$ & $72.73 \pm 0.60$ & $\textbf{72.81} \pm \textbf{0.62}$ \\
 & \textit{Nettack} & $52.38 \pm 0.94$ & $59.19 \pm 0.46$ & $49.21 \pm 0.97$ & $72.23 \pm 1.04$ & $74.60 \pm 0.51$ & $72.14 \pm 0.69$ & $72.95 \pm 0.58$ & $\textbf{76.14} \pm \textbf{0.60}$ \\
 & \textit{DICE} & $67.71 \pm 0.72$ & $66.60 \pm 0.43$ & $66.17 \pm 0.85$ & $72.15 \pm 0.61$ & $67.35 \pm 0.03$ & $71.14 \pm 0.19$ & $69.01 \pm 0.34$ & $\textbf{72.45} \pm \textbf{0.84}$ \\
 & \textit{MINMAX} & $66.29 \pm 0.45$ & $67.54 \pm 0.43$ & $61.02 \pm 0.44$ & $69.90 \pm 0.80$ & $64.57 \pm 0.86$ & $71.20 \pm 0.02$ & $68.60 \pm 0.86$ & $\textbf{72.80} \pm \textbf{0.36}$ \\
 & \textit{Metattack} & $57.64 \pm 0.59$ & $61.20 \pm 0.66$ & $56.81 \pm 0.75$ & $66.33 \pm 0.46$ & $66.29 \pm 0.39$ & $70.14 \pm 0.53$ & $64.75 \pm 0.60$ & $\textbf{71.86} \pm \textbf{0.87}$ \\
\midrule
\multirow{6}{*}{\makecell[l]{\textit{Pubmed}\\ (\textit{medium})}} 
 & \textit{Clean} & $85.72 \pm 0.05$ & $84.89 \pm 0.84$ & $84.72 \pm 0.06$ & $85.19 \pm 0.43$ & $84.53 \pm 0.86$ & $86.19 \pm 0.97$ & $84.49 \pm 0.79$ & $\textbf{86.17} \pm \textbf{0.52}$ \\
 & \textit{Random} & $84.11 \pm 0.28$ & $81.02 \pm 0.72$ & $83.75 \pm 0.10$ & $84.28 \pm 0.41$ & $82.61 \pm 0.63$ & $84.27 \pm 0.64$ & $83.87 \pm 0.03$ & $\textbf{85.83} \pm \textbf{0.82}$ \\
 & \textit{Nettack} & $66.67 \pm 0.36$ & $76.73 \pm 0.88$ & $72.58 \pm 0.09$ & $72.60 \pm 0.14$ & $80.10 \pm 0.45$ & $85.48 \pm 0.12$ & $83.33 \pm 0.65$ & $\textbf{86.24} \pm \textbf{0.09}$ \\
 & \textit{DICE} & $81.68 \pm 0.46$ & $76.93 \pm 0.19$ & $81.44 \pm 0.56$ & $80.73 \pm 0.30$ & $80.39 \pm 0.12$ & $82.93 \pm 0.70$ & $82.28 \pm 0.70$ & $\textbf{85.74} \pm \textbf{0.39}$ \\
 & \textit{MINMAX} & $55.67 \pm 0.46$ & $60.01 \pm 0.97$ & $54.64 \pm 0.24$ & $69.29 \pm 0.37$ & $80.50 \pm 0.06$ & $84.51 \pm 0.83$ & $81.69 \pm 0.57$ & $\textbf{85.64} \pm \textbf{0.97}$ \\
 & \textit{Metattack} & $46.08 \pm 0.39$ & $49.72 \pm 0.37$ & $45.99 \pm 0.92$ & $72.08 \pm 0.51$ & $82.75 \pm 0.25$ & $84.22 \pm 0.39$ & $83.37 \pm 0.89$ & $\textbf{86.45} \pm \textbf{0.45}$ \\
\midrule

\multirow{6}{*}{\makecell[l]{\textit{Flickr}\\ (\textit{large})}}  
 & \textit{Clean} & $56.24 \pm 0.81$ & $47.83 \pm 0.25$ & $39.62 \pm 0.73$ & $54.68 \pm 0.68$ & $60.46 \pm 0.55$ & $74.04 \pm 0.38$ & $74.31 \pm 0.74$ & $\textbf{76.25} \pm \textbf{0.93}$ \\
 & \textit{Random} & $62.82 \pm 0.62$ & $60.80 \pm 0.42$ & $62.44 \pm 0.16$ & $65.43 \pm 0.90$ & $76.54 \pm 0.41$ & $74.83 \pm 0.96$ & $74.85 \pm 0.55$ & $\textbf{76.74} \pm \textbf{0.65}$ \\
 & \textit{Nettack} & $38.82 \pm 0.55$ & $43.12 \pm 0.39$ & $59.87 \pm 0.36$ & $70.31 \pm 0.74$ & $58.70 \pm 0.45$ & $72.58 \pm 0.48$ & $\textbf{74.51} \pm \textbf{0.33}$ & $73.47 \pm 0.97$ \\
 & \textit{DICE} & $51.71 \pm 0.32$ & $48.11 \pm 0.11$ & $47.34 \pm 0.39$ & $71.23 \pm 0.17$ & $73.38 \pm 0.93$ & $73.35 \pm 0.02$ & $73.59 \pm 0.24$ & $\textbf{73.69} \pm \textbf{0.67}$ \\
 & \textit{MINMAX} & $14.57 \pm 0.97$ & $11.71 \pm 0.26$ & $27.13 \pm 0.02$ & $19.68 \pm 0.58$ & $39.17 \pm 0.78$ & $74.82 \pm 0.33$ & $75.04 \pm 0.08$ & $\textbf{76.86} \pm \textbf{0.90}$ \\
 & \textit{Metattack} & $36.93 \pm 0.86$ & $37.29 \pm 0.06$ & $31.72 \pm 0.21$ & $65.05 \pm 0.82$ & $59.08 \pm 0.36$ & $74.85 \pm 0.95$ & $75.05 \pm 0.75$ & $\textbf{76.63} \pm \textbf{0.82}$ \\
\bottomrule
\end{tabular}%
}
\end{table}

\vspace{-1em}
\subsection{Effect of \textbf{KCES} as a Plug-and-Play Module for Enhancing Existing Defenses}

KCES is a training-free and data-independent method, making it highly adaptable as a plug-and-play module to enhance the robustness of existing models. To demonstrate its versatility and effectiveness, we apply KCES to several baseline methods listed in Table~\ref{tab-1:robustness}, including robust GNN architectures (GAT and RGCN) as well as defense algorithms (ProGNN, GNN-SVD, GNN-Jaccard, and GNNGuard).

\begin{figure}[htbp]
  \centering
  \includegraphics[width=1.0\linewidth]{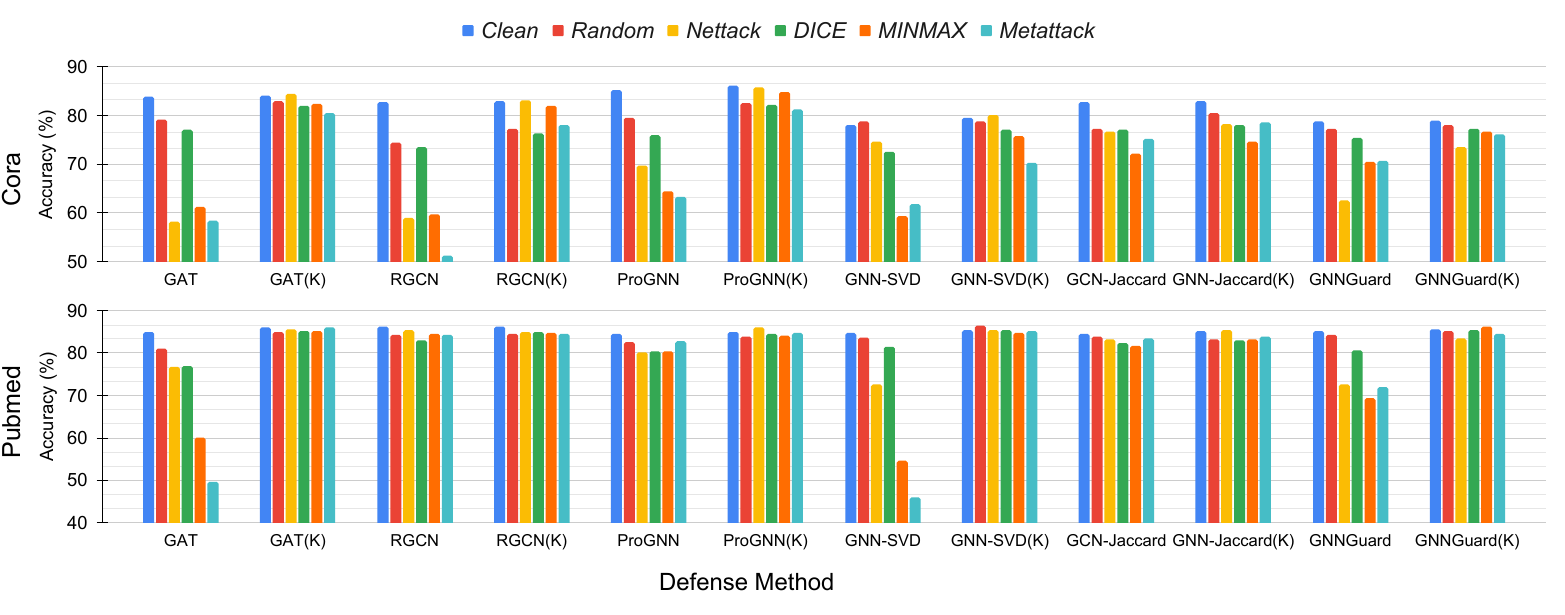}
  \vspace{-2em}
  \caption{Robustness Enhancement Against Adversarial Attacks via \textbf{KCES} Integration.}
  \label{fig-1:plug and play}
\end{figure}

The results, presented in Figure~\ref{fig-1:plug and play}, demonstrate that integrating KCES consistently improves the performance of all baseline methods across diverse adversarial attacks, underscoring its effectiveness as a plug-and-play module. Notably, KCES substantially enhances the robustness of both GNN architectures and defense algorithms, with all KCES-integrated baselines achieving over 70\% accuracy on \textit{Cora} and over 80\% on \textit{Pubmed} under a wide range of adversarial attacks. These findings indicate that KCES does not conflict with existing defense methods and can be seamlessly integrated to further strengthen their robustness against attacks.
\section{Conclusion and Limitation}
In this work, we proposed KCES, a training-free and model-agnostic framework for defending GNNs against adversarial attacks. At its core is the GKC, a novel metric derived from the graph’s Gram matrix to quantify generalization. Building on GKC, we introduced the KC score, which measures the impact of each edge on generalization by evaluating the change in GKC after edge removal. Our theoretical analysis established a direct connection between KC scores and test error, enabling principled edge sanitization. Experiments across diverse datasets and attack settings showed that KCES consistently outperforms baselines and can serve as a plug-and-play component for improving existing defenses. Despite its demonstrated effectiveness, KCES possesses certain limitations. Firstly, its primary focus is on structural perturbations, meaning it does not inherently address adversarial attacks targeting node features. Secondly, as an edge sanitization technique, KCES is not directly applicable to graph-based tasks that are intrinsically edge-centric. Thirdly, while its implementation is model-agnostic, the theoretical underpinnings of KCES rely on assumptions specific to GNNs, which may limit its direct generalization to architectures outside the GNN family. Overcoming these limitations could lead to more robust and versatile iterations of KCES, thereby further advancing GNN security in complex, real-world applications.


\bibliographystyle{unsrt}   
\bibliography{ref}
\newpage

\appendix
\section{Dataset Statistics and Splits}
\label{sec:dataset}
We use five widely adopted benchmark datasets in our experiments: \textit{Polblogs}, \textit{Cora}, \textit{Citeseer}, \textit{Pubmed}, and \textit{Flickr}. Their basic statistics, including the number of nodes, edges, features, classes, and the splits for training, validation, and testing, are summarized in Table~\ref{sup:data_split}.

\begin{table}[htbp]
\caption{Statistics of datasets used in the experiments.}
\label{sup:data_split}
\centering
\small
\begin{tabular}{lccccccc}
\toprule
\textbf{Datasets} & \textbf{\#Nodes} & \textbf{\#Edges} & \textbf{\#Features}  & \textbf{\#Classes}  & \textbf{\#Training}  & \textbf{\#Validation} & \textbf{\#Test}\\
\midrule
\textit{Polblogs}  & 1222  &  16714 & 1490 & 2 & 121 & 123 & 978\\
\textit{Cora}      & 2485  &  5069 & 1433 & 7 & 247 & 249 & 1988 \\
\textit{Citeseer}  & 2110  & 3668  & 3703 & 6 & 210 & 211 & 1688\\
\textit{Pubmed}    & 19717  & 44325  & 500 & 3 & 1971 & 1972 & 15774\\
\textit{Flickr}    & 7575 & 239738 & 12047 & 9 & 756 & 758 & 6060\\
\bottomrule
\end{tabular}
\end{table}

\section{Pseudo-Label Generation with K-Means}
\label{sec:pseudo}
When estimating \textbf{Graph Kernel Complexity (GKC)}, we replace true node labels with pseudo labels, making the Kernel Complexity-Based Edge Sanitization (KCES) method label-independent and solely data-driven. This design allows KCES to be broadly applicable to various model architectures without requiring any training. The specific procedure for generating pseudo labels is detailed in Algorithm~\ref{alg:kmeans}.

\begin{algorithm}[htbp] 
 \caption{Pseudo-Label Generation via K-Means Clustering} 
 \label{alg:kmeans} 
 \textbf{Input:} Feature matrix $\mathbb{X}$; adjacency matrix $\tilde{\mathbb{A}}$; number of clusters $K$ \\
 \textbf{Output:} Pseudo labels $\hat{\mathbf{y}}$
 \begin{algorithmic}[1] 
  \STATE $\mathbf{H} \leftarrow \tilde{\mathbb{A}} \mathbb{X}$ \hfill \small{\textit{// Aggregate node features via graph structure}} 
  \vspace{0.3em}
  \STATE $\mathbf{H}_{\text{norm}} \leftarrow \mathbf{H} / \|\mathbf{H}\|$ \hfill \small{\textit{// Normalize aggregated features row-wise}} 
  \vspace{0.3em}
  \STATE $\hat{\mathbf{y}} \leftarrow \text{KMeans}(\mathbf{H}_{\text{norm}}, K)$ \hfill \small{\textit{// Apply K-Means with $K$ clusters}} 
  \vspace{0.3em}
  \STATE \textbf{return} $\hat{\mathbf{y}}$
 \end{algorithmic} 
\end{algorithm}

Notably, we set the number of clusters $K$ equal to the number of classes in the original dataset.
\section{Experimental Setup}
\label{sec:exp-setup}

\subsection{Implementation Details}
\label{subsec:impl}

We adopt implementations of attack and defense baselines from the \textbf{DeepRobust} library~\cite{jin2020graph}, a comprehensive repository of adversarial attack and defense methods. Our full code is publicly available at: \url{https://anonymous.4open.science/r/KCScore-FB15/README.md}.

\subsection{Model Hyperparameters}
\label{subsec:hyper}
The hyper-parameters for our method across all datasets are listed in Table~\ref{tab:hyper}. 

\begin{table}[htbp]
    \centering
    \caption{Hyperparameters used in our experiments.}
    \resizebox{0.85\columnwidth}{!}{
    \begin{tabular}{lccccc}
    \toprule 
    & \textit{Polblogs} & \textit{Cora} & \textit{Citeseer} & \textit{Pubmed} & \textit{Flickr} \\
    \cmidrule(r){1-6} 
    \vspace{0.3em}
     \# Layers         & $2$ & $2$ & $2$ & $2$ & $2$ \\
     \vspace{0.3em}
     Hidden Dimension  & $[16, 2]$ & $[16, 7]$ & $[16, 6]$ & $[32, 3]$ & $[16, 9]$ \\
     \vspace{0.3em}
     Activation   & \multicolumn{5}{c}{ReLU used for all datasets} \\
      \vspace{0.3em}
     Dropout  & $0.05$ & $0.05$ & $0.05$ & $0.05$ & $0.05$ \\
      \vspace{0.3em}
     Optimizer         & \multicolumn{5}{c}{Adam with $1e$-$5$ weight decay} \\
      \vspace{0.3em}
     Learning Rate& $0.01$ & $0.01$ & $0.01$ & $0.01$ & $0.01$ \\
      \vspace{0.3em}
     Training Steps& $200$ & $200$ & $200$ & $200$ &  $200$ \\
     \bottomrule
    \end{tabular}}
    \label{tab:hyper}
\end{table}

\subsection{GPU Resources Used in Experiments}
\label{subsec:compute}
We perform all experiments using eight NVIDIA RTX A6000 GPUs (48GB memory each) on a dedicated server, enabling efficient training and evaluation across various datasets and attack settings.
\section{Additional Experimental Results}
\label{sec:supp-results}

\subsection{Edge Sanitization in Clean Settings}
\label{subsec:edge-clean}

In Section 6.3, we evaluate the impact of three pruning strategies: \textbf{\textit{High-KC Pruning}}, \textbf{\textit{Low-KC Pruning}}, and \textbf{\textit{Random Pruning}}, on GNN performance in adversarial settings. The results show that Low-KC edges are more beneficial to model performance, while High-KC edges tend to be harmful. To further support this finding, we extend our evaluation to clean graphs and conduct additional experiments on the \textit{Cora} and \textit{Pubmed} datasets.

\begin{figure}[htbp]\label{appendix:ES-Clean}
  \centering
  \includegraphics[width=1.0\linewidth]{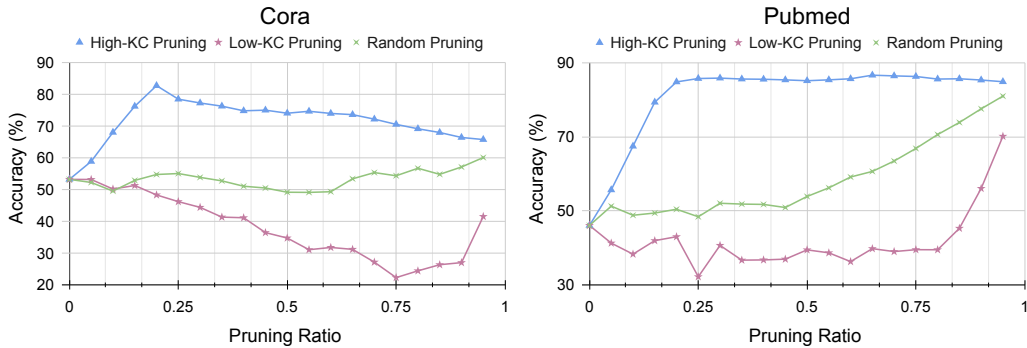}
  \caption{Comparing Edge Pruning Strategies via KC Scores in Adversarial Settings.}
  \label{fig-2:pruning-exp}
\end{figure}

The results in Figure~\ref{appendix:ES-Clean} further confirm that Low-KC edges benefit GNN performance, while High-KC edges are detrimental. Compared with \textbf{\textit{Random Pruning}}, which serves as a baseline to assess whether a pruning strategy is helpful or harmful, these findings demonstrate that \textbf{\textit{High-KC Pruning}} not only enhances generalization on clean graphs but also effectively improves GNN robustness against adversarial attacks.

\section{Theoretical Analysis of the Graph Kernel Model}
\label{sec:theory-gkm}

\subsection{Setup of Theory}
\label{subsec: theory-setup}
The theoretical analysis of our graph kernel model builds upon the framework established in~\cite{arora2019fine}. Before presenting our main results, we first summarize the key notations and settings used throughout this section for clarity. We study a two-layer Graph Neural Network (GNN) as a kernel model applied to an undirected graph with \(N\) nodes. The underlying graph distribution \(\mathcal{D}_G\), defined over \(\mathbb{R}^{N \times F} \times \mathbb{R}^N\), generates a graph \(G = (\mathbb{X}, \tilde{\mathbb{A}}, \tilde{\mathbb{D}}, \mathbf{y})\), where \((\mathbb{X}_i, y_i) \in \mathbb{R}^F \times \mathbb{R}\) are independently and identically distributed node feature-label pairs. The edge structure \(\tilde{\mathbb{A}} \in \{0, 1\}^{N \times N}\) is fixed, symmetric, and includes self-loops (\(\tilde{\mathbb{A}}_{ii} = 1\)), while the degree matrix is defined by \(\tilde{\mathbb{D}}_{ii} = \sum_j \tilde{\mathbb{A}}_{ij}\). We denote \(G_{\text{train}} = (\mathbb{X}, \tilde{\mathbb{A}}, \tilde{\mathbb{D}}, \mathbf{y})\) and \(G_{\text{test}} = (\mathbb{X}', \tilde{\mathbb{A}}, \tilde{\mathbb{D}}, \mathbf{y}')\) as two instantiations sampled from \(\mathcal{D}_G\), representing the training and test data, respectively.



The forward propagation of a two-layer GNN for node \(i\) is given by:
\begin{equation}
f_{\text{GNN}}(\mathbb{X}_i, \tilde{\mathbb{A}}, \tilde{\mathbb{D}}) = \frac{1}{\sqrt{m}} \sum_{r=1}^m a_r \, \sigma\left( W_r^{\top} \left( \tilde{\mathbb{D}}^{-\frac{1}{2}} \tilde{\mathbb{A}} \tilde{\mathbb{D}}^{-\frac{1}{2}} \mathbb{X} \right)_i \right),
\end{equation}
where \(m\) is the number of hidden units; \(\mathbb{W} \in \mathbb{R}^{F \times m}\) is the weight matrix of the first layer, with columns \(W_r \in \mathbb{R}^F\); \(\mathbf{a} = (a_1, \dots, a_m)^\top \in \mathbb{R}^m\) is the second-layer weight vector; and \(\sigma(\cdot)\) denotes the activation function (e.g., ReLU). The term \(\tilde{\mathbb{D}}^{-\frac{1}{2}} \tilde{\mathbb{A}} \tilde{\mathbb{D}}^{-\frac{1}{2}} \mathbb{X}\) represents the graph convolution operation, incorporating symmetric normalization and self-loops.

We define the training error (empirical risk) for a graph \(G_{\text{train}} = (\mathbb{X}, \tilde{\mathbb{A}}, \tilde{\mathbb{D}}, \mathbf{y})\) drawn from the distribution \(\mathcal{D}_G\) as:
\begin{equation}
L(\mathbb{W}) = \frac{1}{2} \sum_{i=1}^{N} \left( y_i - f_{\text{GNN}}(\mathbb{X}_i, \tilde{\mathbb{A}}, \tilde{\mathbb{D}})_i \right)^2,
\end{equation}
where \(f_{\text{GNN}}\) denotes the prediction of the two-layer GNN for node \(i\), and \(\mathbb{W}\) represents the learnable weight parameters.

The test error (expected risk) is defined as the expected empirical risk over the graph distribution \(\mathcal{D}_G\):
\begin{equation}
L_{\mathcal{D}_G}(\mathbb{W}) = \mathbb{E}_{G \sim \mathcal{D}_G} \left[ L(\mathbb{W}) \right].
\end{equation}

\paragraph{Assumptions.} 
We state the assumptions used in our theoretical analysis concerning the initialization, optimization, and distributional properties of the graph kernel model.

\begin{definition}[Non-degenerate Graph Distribution]
\label{non_de_graph}
A graph distribution \(\mathcal{D}_G\) over \(\mathbb{R}^{N \times F} \times \mathbb{R}^N\) is called \((\lambda_0, \delta, N)\)-non-degenerate if it generates graphs \(G = (\mathbb{X}, \tilde{\mathbb{A}}, \tilde{\mathbb{D}}, \mathbf{y})\) with \(N\) nodes, where each node feature-label pair \((\mathbb{X}_i, y_i)\) is independently drawn from the same distribution. The adjacency matrix \(\tilde{\mathbb{A}} \in \{0,1\}^{N \times N}\) is symmetric and includes self-loops (\(\tilde{\mathbb{A}}_{ii} = 1\)), and the degree matrix is defined as \(\tilde{\mathbb{D}}_{ii} = \sum_j \tilde{\mathbb{A}}_{ij}\). With probability at least \(1 - \delta\), the minimum eigenvalue of the Gram matrix \(\mathbb{H}^{\infty}\), constructed from the aggregated features \(\tilde{\mathbb{X}}_i = (\tilde{\mathbb{D}}^{-\frac{1}{2}} \tilde{\mathbb{A}} \tilde{\mathbb{D}}^{-\frac{1}{2}} \mathbb{X})_i\), satisfies \(\lambda_{\min}(\mathbb{H}^{\infty}) \geq \lambda_0 > 0\).
\end{definition}

\begin{assumption}[Graph Kernel Model Assumptions]
\label{assum}
The following assumptions are adopted throughout our analysis:
\begin{enumerate}[leftmargin=2.5em, itemsep=0.3em]
    \item \textbf{Full-Rank Adjacency Matrix.} The adjacency matrix \(\tilde{\mathbb{A}}\) (including self-loops) is full rank.
    
    \item \textbf{Normalized Aggregated Features.} The aggregated feature matrix is defined as \(\tilde{\mathbb{X}} = \tilde{\mathbb{D}}^{-\frac{1}{2}} \tilde{\mathbb{A}} \tilde{\mathbb{D}}^{-\frac{1}{2}} \mathbb{X}\). Without loss of generality, we assume that each row is normalized, i.e., \(\|\tilde{\mathbb{X}}_i\|_2 = 1\) for all \(i \in [N]\).
    
    \item \textbf{Regularity of Node Features.} With probability at least \(1 - \delta\), for any distinct \(i \neq j\), the feature vectors \(\mathbb{X}_i\) and \(\mathbb{X}_j\) are not parallel: \(\mathbb{X}_i \neq c \mathbb{X}_j\) for any scalar \(c \in \mathbb{R}\).
    
    \item \textbf{Bounded Labels.} For all nodes \((\mathbb{X}_i, y_i)\) sampled from \(\mathcal{D}_G\), the labels are bounded by \(|y_i| \leq 1\).
    
    \item \textbf{Gradient Descent Optimization.} The model parameters are updated using gradient descent:
    \begin{equation}
    \label{dynamic}
    \mathbb{W}_{t+1} = \mathbb{W}_t - \eta \nabla_{\mathbb{W}} L(\mathbb{W}_t; \mathbb{X}, \tilde{\mathbb{A}}, \tilde{\mathbb{D}}),
    \end{equation}
    where \(\eta\) is the learning rate.
    
    \item \textbf{Weight Initialization.} The first-layer weights \(\{W_r\}_{r=1}^m\) are independently initialized as \(W_r(0) \sim \mathcal{N}(\mathbf{0}, \kappa^2 \mathbb{I})\) for some \(\kappa \in (0, 1]\), and the second-layer coefficients \(\{a_r\}_{r=1}^m\) are fixed, drawn uniformly from \(\{-1, 1\}\).
    
    \item \textbf{Fixed Second Layer.} During training, only the first-layer weights \(\{W_r\}_{r=1}^m\) are updated. The second-layer weights \(\mathbf{a}\) remain fixed.
\end{enumerate}
\end{assumption}

\subsection{Theorem Introduction}
\subsubsection{Graph Kernel Complexity Bound}
\label{GKC_theorem}

Before presenting the formal theorems, we first state a lemma regarding the Gram Matrix \(\mathbb{H}^{\infty}\). This lemma shows that for a graph \(G\) drawn from the distribution \(\mathcal{D}_G\), both the Gram matrix \(\mathbb{H}^{\infty}\) of the original graph and the modified Gram matrix \(\mathbb{H}^{\infty}_{-(i,j)}\), obtained by removing any edge \(e(i, j)\), have their minimum eigenvalues bounded below by a positive constant \(\lambda_0 > 0\).

\begin{restatable}{lemma}{nondegenerate}
\label{lem:non-degenerate}
Under Assumption~\ref{assum}, with probability at least \(1 - \delta\), the random graph distribution \(\mathcal{D}_G\) is \((\lambda_0, \delta, N)\)-edge-robustly non-degenerate.
\end{restatable}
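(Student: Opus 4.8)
The plan is to reduce \emph{edge-robust non-degeneracy} to a single deterministic positive-definiteness fact applied once to the original Gram matrix $\mathbb{H}^{\infty}$ and once to each edge-deleted matrix $\mathbb{H}^{\infty}_{-(i,j)}$, and then take a uniform lower bound across these finitely many matrices. The starting point is the classical strict positive-definiteness result for the arc-cosine / ReLU-NTK kernel used in the framework of \cite{du2018gradient, arora2019fine}: if the unit vectors $u_1,\dots,u_N$ indexing the kernel in Eq.~\ref{gkgm} are pairwise non-parallel, then the matrix with entries $u_i^{\top}u_j\bigl(\pi-\arccos(u_i^{\top}u_j)\bigr)/(2\pi)$ is strictly positive definite. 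Hence it suffices to verify, for the original graph and for every edge-deleted graph, that the aggregated feature rows fed into the kernel are (a) nonzero and (b) pairwise non-parallel, and then to exhibit a common positive lower bound on the minimum eigenvalue.

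First I would discharge hypotheses (a) and (b). Condition (a) is immediate from item~2 of Assumption~\ref{assum}, since the aggregated features $\tilde{\mathbb{X}}_i$ are unit-normalized and therefore nonzero; note also that the self-loops ($\tilde{\mathbb{A}}_{ii}=1$) keep every degree at least one, so the normalization $\tilde{\mathbb{D}}^{-1/2}$ remains well-defined even after an edge is removed. For (b), item~3 supplies pairwise non-parallelism of the raw features $\mathbb{X}_i$ with probability at least $1-\delta$, and item~1 makes the aggregation operator $S=\tilde{\mathbb{D}}^{-1/2}\tilde{\mathbb{A}}\tilde{\mathbb{D}}^{-1/2}$ invertible. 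I would then argue that the aggregated rows $\tilde{\mathbb{X}}_i=(S\mathbb{X})_i$ inherit pairwise non-parallelism: the event that some pair becomes parallel, $(S\mathbb{X})_i \parallel (S\mathbb{X})_j$, is cut out by the vanishing of the $2\times 2$ minors, a proper algebraic subvariety of the feature space of Lebesgue measure zero, which can be folded into the same regularity event of item~3 (or obtained by genericity when the feature law is absolutely continuous).

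Next I would handle all edge deletions uniformly. For each removed edge $e_{ij}$ the operator $S_{-(i,j)}$ and the aggregated features $\tilde{\mathbb{X}}^{-(i,j)}$ arise from the identical construction applied to $\tilde{\mathbb{A}}_{-(i,j)}$, so the argument of the previous step repeats verbatim, each ``some aggregated pair is parallel'' event again being a measure-zero subvariety. Since there are at most $|E|\le\binom{N}{2}$ edges, I take a union over these finitely many events while keeping the total budget at $\delta$. On the complementary event, each of the at most $|E|+1$ Gram matrices is strictly positive definite; setting $\lambda_0$ to be the minimum of their minimum eigenvalues — a minimum over finitely many strictly positive numbers — yields $\lambda_0>0$ together with $\lambda_{\min}(\mathbb{H}^{\infty})\ge\lambda_0$ and $\lambda_{\min}(\mathbb{H}^{\infty}_{-(i,j)})\ge\lambda_0$ for every edge, which is precisely $(\lambda_0,\delta,N)$-edge-robust non-degeneracy in the sense extending Definition~\ref{non_de_graph}.

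The main obstacle I anticipate is producing a genuinely \emph{uniform} positive $\lambda_0$ rather than one that degenerates across draws or across edge deletions, since invertibility of $S$ by itself does not preserve pairwise non-parallelism of a finite point set. The delicate part is therefore the genericity / measure-zero transfer of non-parallelism through each aggregation operator, ideally replaced by a quantitative lower bound on $\lambda_{\min}$ in terms of the minimal pairwise angular separation of the aggregated features controlled uniformly over all edge deletions. Extra care is needed because deleting an edge perturbs both $\tilde{\mathbb{A}}$ and $\tilde{\mathbb{D}}$ simultaneously, so one must confirm that the edge-deleted aggregation never aligns two feature rows outside the measure-zero bad set, which is what makes the union bound over edges safe within the single probability budget $\delta$.
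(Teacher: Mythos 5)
Your proposal follows the same skeleton as the paper's proof: invoke the strict positive-definiteness result for the ReLU/arc-cosine kernel (Lemma~\ref{theo:non-parallel}, i.e.\ Theorem~3.1 of \cite{du2018gradient}) on the aggregated features, do this once for the original graph and once for each of the finitely many edge-deleted graphs, and set $\lambda_0$ to the minimum of the resulting minimum eigenvalues on a single probability-$(1-\delta)$ event. The genuine difference is how pairwise non-parallelism of the \emph{aggregated} rows is obtained. The paper proves a deterministic transfer lemma (Lemma~\ref{lem:non-parallel}): since $\mathbb{T}=\tilde{\mathbb{D}}^{-1/2}\tilde{\mathbb{A}}\tilde{\mathbb{D}}^{-1/2}$ is invertible, non-parallelism of the raw rows passes to the rows of $\mathbb{T}\mathbb{X}$. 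You instead treat ``some aggregated pair becomes parallel'' as a proper algebraic, measure-zero event and fold it into the $\delta$ budget. Your stated reason for distrusting the deterministic route --- that invertibility of the aggregation operator alone does not preserve non-parallelism of a finite point set --- is correct, and it in fact pinpoints a gap in the paper's own proof of Lemma~\ref{lem:non-parallel}: whenever $\mathrm{rank}(\mathbb{X})<N$ (e.g.\ when $F<N$, as on \textit{Cora}), the nonzero row vector $\mathbb{T}_i-c\,\mathbb{T}_j$ may lie in the left null space of $\mathbb{X}$, and the paper's observation that the columns ``span a subspace of dimension at least two'' does not yield the claimed contradiction. The trade-off is that your genericity argument needs the feature law to be absolutely continuous (or the regularity item of Assumption~\ref{assum} restated for aggregated features under every edge deletion), which the paper never assumes; the paper's route is distribution-free but rests on a lemma whose proof does not go through as written.

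One gap is shared by both arguments, and you flag it without resolving it: invertibility (or at least pairwise non-parallelism of the rows) of the edge-deleted operator $\mathbb{T}_{-(i,j)}$. The paper dismisses this ``by the same rank argument,'' but Assumption~\ref{assum} item~1 concerns only the original $\tilde{\mathbb{A}}$; deleting an edge can make $\tilde{\mathbb{A}}_{-(i,j)}$ rank-deficient, and in particular can make two rows of $\mathbb{T}_{-(i,j)}$ identical (take two adjacent nodes $i,k$ whose closed neighborhoods differ only in $j$: after deleting $e_{ij}$ they coincide, and their degrees are equal). In that case the corresponding aggregated rows are parallel for \emph{every} feature matrix --- your ``bad set'' is the whole space rather than a proper subvariety --- and $\mathbb{H}^{\infty}_{-(i,j)}$ is singular, so the conclusion $\lambda_{\min}(\mathbb{H}^{\infty}_{-(i,j)})\ge\lambda_0>0$ itself fails. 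A complete proof (yours or the paper's) needs an additional hypothesis, e.g.\ that every single-edge deletion leaves the adjacency matrix full rank, equivalently that all closed neighborhoods remain distinct after any one deletion, or else the lemma must be restricted to the edges for which this holds.
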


We now present the formal versions of Theorems~4.1 and 4.2, which provide theoretical guarantees on the behavior of the training and test errors based on properties of the Gram matrix.

\begin{theorem}
\label{train_theo_fo}
Under Assumption~\ref{assum}, let the initialization and optimization parameters satisfy $\kappa = O\left(\frac{\epsilon \delta}{\sqrt{N}}\right)$, $m = \Omega\left(\frac{N^7}{\lambda_0^4 \kappa^2 \delta^4 \epsilon^2}\right)$, and $\eta = O\left(\frac{\lambda_0}{N^2}\right)$. Then, with probability at least $1 - \delta$ over the random initialization, for all iterations $k = 0, 1, 2, \ldots$, the training error satisfies:
\begin{equation}
    L(\mathbb{W}_k; \mathbb{X}, \tilde{\mathbb{A}}, \tilde{\mathbb{D}}) = \sqrt{\sum_{i=1}^N \left(1 - \eta \lambda_i\right)^{2k} \left(\mathbf{v}_i^\top \mathbf{y}\right)^2} \pm \epsilon,
\end{equation}
where $\lambda_i$ and $\mathbf{v}_i$ denote the eigenvalues and corresponding eigenvectors of the Gram matrix $\mathbb{H}^{\infty}$.
\end{theorem}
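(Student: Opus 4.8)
The plan is to follow the fine-grained neural tangent kernel analysis of Arora et al.~\cite{arora2019fine}, adapting each step by replacing the raw feature vectors with the aggregated features $\tilde{\mathbb{X}}_i = (\tilde{\mathbb{D}}^{-1/2}\tilde{\mathbb{A}}\tilde{\mathbb{D}}^{-1/2}\mathbb{X})_i$, which Assumption~\ref{assum}(2) normalizes to unit length. Write $u(k) = (f_{\text{GNN}}(\mathbb{X}_i,\tilde{\mathbb{A}},\tilde{\mathbb{D}}))_{i=1}^N \in \mathbb{R}^N$ for the vector of network outputs at step $k$, so that the quantity tracked in the statement is the residual norm $\|\mathbf{y} - u(k)\|_2$. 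The central object is the finite-width, time-dependent Gram matrix $H(k) \in \mathbb{R}^{N\times N}$ with entries $H_{ij}(k) = \frac{1}{m}\sum_{r=1}^m \tilde{\mathbb{X}}_i^\top \tilde{\mathbb{X}}_j\,\mathbb{I}\{W_r(k)^\top\tilde{\mathbb{X}}_i \ge 0,\ W_r(k)^\top \tilde{\mathbb{X}}_j \ge 0\}$, whose population limit is exactly the arccos kernel $\mathbb{H}^\infty$ of Eq.~\eqref{gkgm}.

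First I would establish concentration at initialization: taking the expectation over $W_r(0)\sim\mathcal{N}(\mathbf{0},\kappa^2\mathbb{I})$ of a single summand and using $\|\tilde{\mathbb{X}}_i\|_2 = 1$ reproduces the closed form $\mathbb{H}^\infty_{ij} = \tilde{\mathbb{X}}_i^\top\tilde{\mathbb{X}}_j(\pi - \arccos(\tilde{\mathbb{X}}_i^\top\tilde{\mathbb{X}}_j))/(2\pi)$, and a Hoeffding bound over the $m$ independent neurons gives $\|H(0) - \mathbb{H}^\infty\|_2$ small with probability $1-\delta$ once $m$ is large. Next I would linearize the gradient-descent dynamics of the predictions. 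Differentiating the squared loss and using the fixed second layer (Assumption~\ref{assum}(7)) yields the residual recursion
\begin{equation}
\mathbf{y} - u(k+1) = \left(\mathbb{I} - \eta H(k)\right)\left(\mathbf{y} - u(k)\right) + \mathbf{e}(k),
\end{equation}
where $\mathbf{e}(k)$ collects the second-order terms arising from activation-pattern changes between steps. The small initialization scale $\kappa = O(\epsilon\delta/\sqrt{N})$ forces $\|u(0)\|_2 = O(\sqrt{N}\kappa) = O(\epsilon)$, so that $\mathbf{y} - u(0) = \mathbf{y} \pm O(\epsilon)$.

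The heart of the argument, and the step I expect to be hardest, is a simultaneous induction on $k$ showing that (i) every weight stays inside a ball $\|W_r(k) - W_r(0)\|_2 = O(\sqrt{N}/(\sqrt{m}\,\lambda_0))$ and (ii) consequently $\|H(k) - \mathbb{H}^\infty\|_2$ and $\|\mathbf{e}(k)\|_2$ remain negligible for all iterations. The overparameterization $m = \Omega(N^7/(\lambda_0^4\kappa^2\delta^4\epsilon^2))$ is precisely what guarantees that only an $o(1)$ fraction of the $m$ ReLU activation patterns flip during training, keeping $H(k)$ pinned near $\mathbb{H}^\infty$; the non-degeneracy bound $\lambda_{\min}(\mathbb{H}^\infty)\ge\lambda_0>0$ from Lemma~\ref{lem:non-degenerate} then supplies the contraction needed to close the induction and to control the total movement of the loss. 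Unrolling the recursion with $H(k)\approx\mathbb{H}^\infty$ gives $\mathbf{y} - u(k) = (\mathbb{I} - \eta\mathbb{H}^\infty)^k\mathbf{y} \pm \epsilon$, and diagonalizing $\mathbb{H}^\infty = \sum_{i=1}^N \lambda_i \mathbf{v}_i\mathbf{v}_i^\top$ yields $\|\mathbf{y} - u(k)\|_2 = \sqrt{\sum_{i=1}^N (1-\eta\lambda_i)^{2k}(\mathbf{v}_i^\top\mathbf{y})^2} \pm \epsilon$, the claimed identity. Finally, the step size $\eta = O(\lambda_0/N^2)$ ensures $\|\mathbb{I} - \eta\mathbb{H}^\infty\|_2 < 1$, so each per-step perturbation is stable and the errors accumulated across all iterations stay within the target tolerance $\epsilon$.
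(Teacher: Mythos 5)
Your proposal is correct in substance, but it takes a different route from the paper: it re-derives the NTK machinery that the paper simply cites. Both arguments rest on the same key observation, namely that the two-layer GNN of Eq.~\eqref{GCN} is exactly the two-layer ReLU network of~\cite{arora2019fine} applied to the aggregated features $\tilde{\mathbb{X}} = \tilde{\mathbb{D}}^{-1/2}\tilde{\mathbb{A}}\tilde{\mathbb{D}}^{-1/2}\mathbb{X}$. The paper's proof, however, is a short black-box reduction: it checks that the aggregated pairs $(\tilde{\mathbb{X}}_i, y_i)$ are i.i.d.\ (Lemma~\ref{lem:iid-aggregated}), that $\lambda_{\min}(\mathbb{H}^\infty)\ge\lambda_0>0$ (Lemma~\ref{lem:non-degenerate}), and that the normalization and boundedness requirements of Assumption~\ref{assum:reference} are met, and then invokes Lemma~\ref{ref_train_theo} (the training-dynamics theorem of Arora et al.) verbatim with the same choices of $\kappa$, $m$, and $\eta$. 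You instead unpack that cited result and sketch its internal proof --- the finite-width Gram matrix $H(k)$, concentration at initialization, the residual recursion $\mathbf{y}-u(k+1)=(\mathbb{I}-\eta H(k))(\mathbf{y}-u(k))+\mathbf{e}(k)$, and the induction keeping each $W_r$ in an $O\bigl(\sqrt{N}/(\sqrt{m}\,\lambda_0)\bigr)$ ball so that activation patterns rarely flip --- which is indeed how the black box is proved. Your route buys self-containedness and makes explicit where $m$, $\eta$, and $\kappa$ enter; it also sidesteps Lemma~\ref{lem:iid-aggregated} entirely, since the dynamics analysis is conditional on the fixed training data and independence is only needed to formally match the distributional hypothesis of the cited lemma (and later for the test-error bound). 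The cost is that a rigorous write-up would have to reproduce the full suite of technical lemmas from~\cite{arora2019fine}, so as a proof for this paper it is much longer without adding generality. One shared subtlety: the displayed formula is really the residual norm $\|\mathbf{y}-u(k)\|_2$, as in Arora et al., rather than the half-sum-of-squares loss of Eq.~\eqref{eq-trainloss}; your explicit identification of the tracked quantity as the residual norm is the correct reading of the statement.
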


\begin{theorem}
\label{test_theo_fo}
Fix $\delta \in (0,1)$. Under Assumption~\ref{assum}, let $\kappa = O\left(\frac{\lambda_0 \delta}{N}\right)$ and suppose $m \geq \kappa^{-2} \cdot \mathrm{poly}\left(N, \lambda_0^{-1}, \delta^{-1}\right)$. Then, for any number of gradient descent steps \(k \geq \Omega\left(\frac{1}{\eta \lambda_0} \log \frac{N}{\delta}\right)\), the test error of the two-layer GNN satisfies, with probability at least $1 - \delta$,
\begin{equation}
    L_{\mathcal{D}_G}(\mathbb{W}_k) \leq \sqrt{\frac{2 \mathbf{y}^{\top} \left(\mathbb{H}^{\infty}\right)^{-1} \mathbf{y}}{N}} + O\left(\sqrt{\frac{\log \frac{N}{\lambda_0 \delta}}{N}}\right).
\end{equation}
\end{theorem}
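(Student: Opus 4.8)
The plan is to follow the Rademacher-complexity argument of Arora et al.~\cite{arora2019fine}, adapted to the graph setting by treating the normalized aggregated features $\tilde{\mathbb{X}}_i = (\tilde{\mathbb{D}}^{-\frac{1}{2}}\tilde{\mathbb{A}}\tilde{\mathbb{D}}^{-\frac{1}{2}}\mathbb{X})_i$ as the effective inputs. Under Assumption~\ref{assum}, these satisfy $\|\tilde{\mathbb{X}}_i\|_2 = 1$, so the two-layer GNN reduces to a standard two-layer ReLU network acting on unit-norm inputs, and $\mathbb{H}^\infty$ is exactly the associated neural-tangent-kernel Gram matrix. First I would invoke Theorem~\ref{train_theo_fo} (training dynamics) together with the non-degeneracy guarantee $\lambda_{\min}(\mathbb{H}^\infty)\ge\lambda_0$ from Lemma~\ref{lem:non-degenerate}: since the residual after $k$ steps is governed by $(\mathbb{I}-\eta\mathbb{H}^\infty)^k\mathbf{y}$, choosing $k\ge\Omega(\tfrac{1}{\eta\lambda_0}\log\tfrac{N}{\delta})$ drives the training error below the concentration scale $O(\sqrt{\log(N/(\lambda_0\delta))/N})$.

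Second, I would bound the total distance travelled by the weights, $\|\mathbb{W}_k-\mathbb{W}_0\|_F$, which is the quantity that controls the capacity of the learned predictor. Writing the gradient-descent update in the eigenbasis $\{(\lambda_i,\mathbf{v}_i)\}$ of $\mathbb{H}^\infty$ and summing the per-step increments, the cumulative movement telescopes to the norm of the kernel-regression solution, giving
\begin{equation}
\|\mathbb{W}_k-\mathbb{W}_0\|_F \;\le\; \sqrt{\mathbf{y}^\top(\mathbb{H}^\infty)^{-1}\mathbf{y}} \;+\; \text{lower-order terms}.
\end{equation}
The large-width condition $m\ge\kappa^{-2}\,\mathrm{poly}(N,\lambda_0^{-1},\delta^{-1})$ is what keeps this increment faithful to the idealized dynamics throughout training, so that the $\tilde{O}(m^{-1/2})$ perturbation is absorbed into the lower-order terms.

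Third, I would define the function class $\mathcal{F}_B$ of two-layer networks whose first-layer weights lie within Frobenius distance $B=\sqrt{\mathbf{y}^\top(\mathbb{H}^\infty)^{-1}\mathbf{y}}$ of their initialization, bound its empirical Rademacher complexity by $O(B/\sqrt{N})$ using the $1$-Lipschitzness of ReLU and the unit-norm aggregated inputs, and then apply the standard Rademacher generalization theorem. Combining the near-zero training error from the first step with this complexity term and a $O(\sqrt{\log(1/\delta)/N})$ concentration term yields the stated bound, the leading $\sqrt{2\mathbf{y}^\top(\mathbb{H}^\infty)^{-1}\mathbf{y}/N}$ term arising directly from the Rademacher complexity of $\mathcal{F}_B$ together with the scaling $B$ established above.

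The main obstacle is the second step: rigorously controlling the coupling between the true finite-width trajectory and its NTK linearization. One must show that the weights never leave a small ball around initialization, so that the ReLU activation patterns—and hence the time-varying Gram matrix $\mathbb{H}(k)$—stay uniformly close to $\mathbb{H}^\infty$ across all $k$ iterations; this uniform-in-$k$ control is precisely where the polynomial width requirement and the careful accounting of the perturbation terms become essential. The graph-specific subtlety is verifying that the aggregation step preserves the concentration and non-degeneracy properties needed for this coupling, which is exactly what Lemma~\ref{lem:non-degenerate} and the normalization in Assumption~\ref{assum} secure.
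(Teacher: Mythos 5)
Your reduction step---treating the unit-norm aggregated features $\tilde{\mathbb{X}}_i$ as effective inputs so that the GNN becomes a standard two-layer ReLU network and $\mathbb{H}^\infty$ its kernel Gram matrix---is exactly the paper's strategy. The difference is what happens next: the paper stops there and invokes the test-error bound of Arora et al.\ as a black box (Lemma~\ref{ref_test_theo}), after checking its hypotheses, namely unit-norm inputs (Assumption~\ref{assum}), non-degeneracy of the Gram matrix (Lemma~\ref{lem:non-degenerate}), and i.i.d.\ effective samples (Lemma~\ref{lem:iid-aggregated}, carried over from the proof of Theorem~\ref{train_theo_fo}). You instead re-derive that black box from scratch: training convergence, the weight-movement bound $\|\mathbb{W}_k-\mathbb{W}_0\|_F \lesssim \sqrt{\mathbf{y}^\top(\mathbb{H}^\infty)^{-1}\mathbf{y}}$, the Rademacher complexity of the distance-bounded class, and the finite-width/NTK coupling. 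That outline is a faithful account of the internal proof in \cite{arora2019fine}, so nothing in it is wrong; it is simply much more work than the paper's proof requires, and all of it is generic to two-layer ReLU networks rather than specific to graphs.

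The genuine gap lies in what you identify as the graph-specific subtlety. You claim it is the preservation of ``concentration and non-degeneracy'' under aggregation, secured by Lemma~\ref{lem:non-degenerate} and the normalization assumption; but the NTK coupling you worry about is not graph-specific at all---Arora et al.\ handle it for arbitrary unit-norm inputs. What is graph-specific, and what your proposal never addresses, is the \emph{independence} of the effective samples. Your third step applies ``the standard Rademacher generalization theorem,'' and the population risk $L_{\mathcal{D}_G}$ must be interpreted as an expectation over node-level draws; both require the pairs $(\tilde{\mathbb{X}}_i, y_i)$ to be i.i.d. After aggregation this is far from automatic: $\tilde{\mathbb{X}}_i = \sum_j \bigl(\tilde{\mathbb{D}}^{-1/2}\tilde{\mathbb{A}}\tilde{\mathbb{D}}^{-1/2}\bigr)_{ij}\mathbb{X}_j$ mixes the raw features of neighboring nodes, so two aggregated features that share a neighbor share random terms. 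The paper devotes Lemma~\ref{lem:iid-aggregated} precisely to this point, and without it (or a substitute argument) neither your Rademacher step nor the identification of $L_{\mathcal{D}_G}$ as a population risk over i.i.d.\ samples goes through. Any complete write-up along your lines must add this ingredient.
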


\subsubsection{Perturbed Graph KC-Score Bound}

\begin{corollary}
\label{edge_theo_fo}
Fix $\delta \in (0,1)$. Under Assumption~\ref{assum}, let $\kappa = O\left(\frac{\lambda_0 \delta}{N}\right)$ and suppose $m \geq \kappa^{-2} \cdot \mathrm{poly}\left(N, \lambda_0^{-1}, \delta^{-1}\right)$. For a two-layer GNN trained on a perturbed graph $G_{-(i,j)}$ (i.e., with edge $(i,j)$ removed) using gradient descent for $t \geq \Omega\left(\frac{1}{\eta \lambda_0} \log \frac{N}{\delta}\right)$ iterations, the following holds with probability at least $1 - \delta$:
\begin{equation}
    L_{\mathcal{D}_{G_{-(i,j)}}}(\mathbb{W}_t) \leq \sqrt{\mathrm{GKC}(\mathbb{H}^{\infty})} + \sqrt{KC(i,j)} + O\left(\sqrt{\frac{\log \frac{N}{\lambda_0 \delta}}{N}}\right),
\end{equation}
where $\mathrm{GKC}(\mathbb{H}^{\infty})$ denotes the Graph Kernel Complexity of the original graph, and $\mathrm{KC}(i,j)$ is the KC score associated with edge $(i,j)$, quantifying the change in GKC resulting from its removal.
\end{corollary}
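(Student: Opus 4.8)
The plan is to obtain this corollary as a direct consequence of the single-graph test-error bound in Theorem~\ref{test_theo_fo}, applied not to the original graph but to the edge-deleted graph $G_{-(i,j)}$, and then to convert the resulting $\mathrm{GKC}(\mathbb{H}^{\infty}_{-(i,j)})$ term into the stated $\sqrt{\mathrm{GKC}(\mathbb{H}^{\infty})}+\sqrt{KC(i,j)}$ form using the definition of the KC score together with the subadditivity of the square root. The argument thus reuses the machinery already established for the single-graph generalization bound; the only genuinely new requirement is that the hypotheses of Theorem~\ref{test_theo_fo} persist after an edge is removed.

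First I would verify that $G_{-(i,j)}$, equipped with the same i.i.d.\ feature--label pairs and the modified (still symmetric, self-looped) adjacency matrix, defines an admissible distribution $\mathcal{D}_{G_{-(i,j)}}$ satisfying Assumption~\ref{assum}. The critical point is the spectral lower bound: Theorem~\ref{test_theo_fo} requires $\lambda_{\min}(\mathbb{H}^{\infty}_{-(i,j)}) \ge \lambda_0$, which is exactly the edge-robust non-degeneracy guaranteed by Lemma~\ref{lem:non-degenerate}. Since that lemma provides the \emph{same} $\lambda_0$ for the perturbed Gram matrix, the hyperparameter conditions $\kappa = O(\lambda_0\delta/N)$ and $m \ge \kappa^{-2}\,\mathrm{poly}(N,\lambda_0^{-1},\delta^{-1})$ stated in the corollary suffice unchanged, and the iteration requirement $t \ge \Omega\!\left(\tfrac{1}{\eta\lambda_0}\log\tfrac{N}{\delta}\right)$ matches as well. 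Applying Theorem~\ref{test_theo_fo} verbatim to $G_{-(i,j)}$ then yields, with probability at least $1-\delta$,
\begin{equation}
L_{\mathcal{D}_{G_{-(i,j)}}}(\mathbb{W}_t) \le \sqrt{\mathrm{GKC}(\mathbb{H}^{\infty}_{-(i,j)})} + O\!\left(\sqrt{\tfrac{\log\frac{N}{\lambda_0\delta}}{N}}\right).
\end{equation}

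Next I would control the perturbed complexity term. Because $\mathbb{H}^{\infty}$ and $\mathbb{H}^{\infty}_{-(i,j)}$ are both positive definite (their minimum eigenvalues exceed $\lambda_0>0$), both $\mathrm{GKC}(\mathbb{H}^{\infty})$ and $\mathrm{GKC}(\mathbb{H}^{\infty}_{-(i,j)})$ are nonnegative and their square roots are well defined. The definition $KC(i,j)=\bigl|\mathrm{GKC}(\mathbb{H}^{\infty})-\mathrm{GKC}(\mathbb{H}^{\infty}_{-(i,j)})\bigr|$ immediately gives $\mathrm{GKC}(\mathbb{H}^{\infty}_{-(i,j)}) \le \mathrm{GKC}(\mathbb{H}^{\infty}) + KC(i,j)$, and subadditivity of the square root ($\sqrt{a+b}\le\sqrt{a}+\sqrt{b}$ for $a,b\ge0$) then gives $\sqrt{\mathrm{GKC}(\mathbb{H}^{\infty}_{-(i,j)})} \le \sqrt{\mathrm{GKC}(\mathbb{H}^{\infty})}+\sqrt{KC(i,j)}$. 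Substituting this into the displayed inequality completes the proof.

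The steps are all elementary once Theorem~\ref{test_theo_fo} is in hand, so I do not expect a substantive obstacle; the only delicate point is transferring the spectral lower bound to the edge-deleted graph, which is precisely why the edge-robust formulation of Lemma~\ref{lem:non-degenerate} is needed. If one further wants the conclusion to hold simultaneously for every edge (as the sanitization algorithm implicitly requires), a union bound over the $|E|$ candidate removals would replace $\delta$ by $\delta/|E|$ and inflate the logarithmic factor to $\log\frac{N|E|}{\lambda_0\delta}$; the edge-robust lemma is designed to absorb this without degrading $\lambda_0$.
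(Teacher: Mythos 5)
Your proposal is correct and follows essentially the same route as the paper's own proof: invoke the edge-robust non-degeneracy of Lemma~\ref{lem:non-degenerate} to keep $\lambda_{\min}(\mathbb{H}^{\infty}_{-(i,j)}) \ge \lambda_0$, apply Theorem~\ref{test_theo_fo} to the edge-deleted graph, and then convert $\sqrt{\mathrm{GKC}(\mathbb{H}^{\infty}_{-(i,j)})}$ into $\sqrt{\mathrm{GKC}(\mathbb{H}^{\infty})} + \sqrt{KC(i,j)}$ via the definition of the KC score and subadditivity of the square root. Your closing remark about a union bound over all edges is a sensible extension the paper does not make explicit, but the core argument is identical.
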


\subsection{Background on Kernel Model Generalization}
We begin by reviewing the key theoretical results from~\cite{arora2019fine} concerning generalization in ReLU-based kernel models, specifically focusing on two-layer neural networks with ReLU activation, which serve as a foundation for our analysis.

\subsubsection{Setup from Prior Work}
The framework presented in~\cite{arora2019fine} analyzes the training dynamics and generalization properties of two-layer neural networks with ReLU activation. Below, we summarize the key elements of their setup to facilitate its extension to our graph-based context.

\paragraph{Model Setup} 
Consider a dataset of \(n\) input-label pairs \(\{(X_i, y_i)\}_{i=1}^n\), where each input \(X_i \in \mathbb{R}^d\) is a \(d\)-dimensional vector and each label \(y_i \in \mathbb{R}\) is a real-valued scalar. The pairs are independently drawn from a distribution \(\mathcal{D}\) over \(\mathbb{R}^d \times \mathbb{R}\). Let \(\mathbb{X} = (X_1, \ldots, X_n) \in \mathbb{R}^{d \times n}\) denote the input matrix and \(Y = (y_1, \ldots, y_n)^{\top} \in \mathbb{R}^n\) the label vector.

The two-layer neural network with \(m\) hidden units is defined as:
\begin{equation}
f_{\mathbb{W}, A}(X) = \frac{1}{\sqrt{m}} \sum_{r=1}^m a_r \sigma(W_r^{\top} X),
\end{equation}
where \(X \in \mathbb{R}^d\) is the input vector, \(W_r \in \mathbb{R}^d\) is the weight vector of the \(r\)-th neuron in the first layer, and \(a_r \in \mathbb{R}\) is the corresponding second-layer weight. The matrix \(\mathbb{W} = (W_1, \ldots, W_m) \in \mathbb{R}^{d \times m}\) stacks all first-layer weights, while \(A = (a_1, \ldots, a_m)^{\top} \in \mathbb{R}^m\) contains the second-layer weights. The activation function \(\sigma(z) = \max\{z, 0\}\) is the ReLU function. The scaling factor \(\frac{1}{\sqrt{m}}\) ensures that the output magnitude remains controlled as \(m\) increases.

\paragraph{Training Setup} To train the neural network, the work of~\cite{arora2019fine} applies gradient descent (GD) with random initialization to minimize the empirical loss, defined as the mean squared error over the training set:
\begin{equation}
L(\mathbb{W}; \mathbb{X}) = \frac{1}{2} \sum_{i=1}^n \left( y_i - f_{\mathbb{W}, A}(X_i) \right)^2,
\end{equation}
where \(f_{\mathbb{W}, A}(X_i)\) denotes the network's prediction for the input \(X_i\), and \(y_i\) is the associated label. The population loss, representing the expected test error, is given by:
\begin{equation}
L_{\mathcal{D}}(\mathbb{W}) = \mathbb{E}_{(\mathbb{X}, Y) \sim \mathcal{D}} \left[ L(\mathbb{W}; \mathbb{X}) \right].
\end{equation}

\paragraph{Assumption}
The analysis in~\cite{arora2019fine} is built upon a set of assumptions concerning the network initialization, optimization procedure, and the underlying data distribution. These assumptions establish the theoretical foundation required to analyze the generalization and convergence behavior of the two-layer ReLU network.

\begin{definition}
\label{def:non-degenerate}
A distribution \(\mathcal{D}\) over \(\mathbb{R}^d \times \mathbb{R}\) is said to be \((\lambda_0, \delta, n)\)-non-degenerate if, for \(n\) samples \(\{(X_i, y_i)\}_{i=1}^n\) independently drawn from \(\mathcal{D}\), the Gram matrix \(\mathbb{H}^{\infty} \in \mathbb{R}^{n \times n}\), whose entries are given by
\begin{equation}
\begin{aligned}
\mathbb{H}_{ij}^{\infty} 
&= \mathbb{E}_{W \sim \mathcal{N}(0, \mathbb{I})} 
\left[ X_i^{\top} X_j \cdot \mathbb{I}\{W^{\top} X_i \geq 0,\, W^{\top} X_j \geq 0\} \right] \\
&= \frac{X_i^{\top} X_j \left( \pi - \arccos(X_i^{\top} X_j) \right)}{2\pi},
\end{aligned}
\end{equation}
satisfies \(\lambda_{\min}(\mathbb{H}^{\infty}) \geq \lambda_0 > 0\) with probability at least \(1 - \delta\).
\end{definition}

\begin{assumption}
\label{assum:reference}
The following assumptions hold throughout the analysis in~\cite{arora2019fine}:
\begin{enumerate}[leftmargin=2.5em, itemsep=0.3em]
    \item \textbf{Normalized Inputs}: For all input-label pairs \((X_i, y_i)\) sampled from \(\mathcal{D}\), the inputs satisfy \(\|X_i\|_2 = 1\). This can be achieved by scaling each \(X_i\) by its \(\ell_2\)-norm.

    \item \textbf{Bounded Labels}: For all input-label pairs \((X_i, y_i)\) sampled from \(\mathcal{D}\), the labels satisfy \(|y_i| \leq 1\).

    \item \textbf{Data Distribution}: The data distribution \(\mathcal{D}\) is \((\lambda_0, \delta, n)\)-non-degenerate, as defined in Definition~\ref{def:non-degenerate}.

    \item \textbf{Parameter Initialization}: The first-layer weights \(\{W_r\}_{r=1}^m\) are initialized as \(W_r(0) \sim \mathcal{N}(0, \kappa^2 \mathbb{I})\) with \(0 < \kappa \leq 1\), and the second-layer coefficients \(\{a_r\}_{r=1}^m\) are sampled uniformly from \(\{-1, 1\}\). All initializations are mutually independent.

    \item \textbf{Fixed Second Layer}: During training, the second-layer coefficients \(A\) remain fixed, and only the first-layer weights \(\{W_r\}_{r=1}^m\) are optimized via gradient descent.

    \item \textbf{Gradient Descent}: The model is updated using gradient descent as follows:
    \begin{equation}
        W_r(k+1) = W_r(k) - \eta \frac{\partial L(\mathbb{W}(k); \mathbb{X})}{\partial W_r},
    \end{equation}
    where \(\eta > 0\) is the learning rate, and the gradient is given by:
    \begin{equation}
        \frac{\partial L(\mathbb{W}; \mathbb{X})}{\partial W_r} 
        = \frac{a_r}{\sqrt{m}} \sum_{i=1}^n \left( f_{\mathbb{W}, A}(X_i) - y_i \right) \mathbb{I}\{W_r^{\top} X_i \geq 0\} X_i.
    \end{equation}
\end{enumerate}
\end{assumption}

\subsubsection{Main Theorem of Previous Work}
We present two theorems from~\cite{arora2019fine} concerning the Gram matrix of a two-layer ReLU neural network, which characterize the behavior of the training error and test error, respectively.

\begin{lemma}
\label{ref_train_theo}
\textnormal{\cite{arora2019fine}}  
Under Assumption~\ref{assum:reference}, let \(\kappa = O\left(\frac{\epsilon \delta}{\sqrt{n}}\right)\), 
\(m = \Omega\left(\frac{n^7}{\lambda_0^4 \kappa^2 \delta^2 \epsilon^2}\right)\), and 
\(\eta = O\left(\frac{\lambda_0}{n^2}\right)\). Then, with probability at least \(1 - \delta\) over the random initialization, for all \(k = 0, 1, 2, \ldots\), the output of the neural network satisfies:
\begin{equation}
L(\mathbb{W}_k; \mathbb{X}) = \sqrt{\sum_{i=1}^n \left(1 - \eta \lambda_i\right)^{2k} \left(\mathbf{v}_i^{\top} \mathbf{y}\right)^2} \pm \epsilon,
\end{equation}
where \(\lambda_i\) are the eigenvalues of the Gram matrix \(\mathbb{H}^{\infty}\), and \(\mathbf{v}_i\) are the corresponding eigenvectors.
\end{lemma}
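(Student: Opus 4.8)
The statement is exactly Theorem~4.1 of~\cite{arora2019fine}, so the plan is to reproduce its over-parameterized (NTK) training-dynamics argument, reading $L(\mathbb{W}_k;\mathbb{X})$ as the $\ell_2$ norm of the prediction residual $\mathbf{r}(k) = \mathbf{u}(k) - \mathbf{y}$, where $\mathbf{u}(k) = (f_{\mathbb{W}(k),A}(X_1),\dots,f_{\mathbb{W}(k),A}(X_n))^{\top}$. First I would write out the discrete gradient-descent update for $\mathbf{u}(k)$ and observe that it is driven by the \emph{time-dependent empirical Gram matrix} $\mathbb{H}(k)$ with entries $\mathbb{H}_{ij}(k) = \tfrac{1}{m}\sum_{r=1}^{m} X_i^{\top}X_j\,\mathbb{I}\{W_r(k)^{\top}X_i \ge 0,\,W_r(k)^{\top}X_j \ge 0\}$, whose expectation over the random initialization is $\mathbb{H}^{\infty}$. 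The whole argument rests on showing that, for $m$ large enough, $\mathbb{H}(k)$ stays uniformly close to $\mathbb{H}^{\infty}$ at every iteration, so that the residual obeys an (almost) exact linear recursion $\mathbf{r}(k+1) = (\mathbb{I} - \eta\mathbb{H}^{\infty})\mathbf{r}(k) + \boldsymbol{\zeta}(k)$ with a controllable perturbation $\boldsymbol{\zeta}(k)$.

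The key steps, in order, are: (i) a concentration bound showing $\|\mathbb{H}(0) - \mathbb{H}^{\infty}\|_2$ is small with high probability and, via Lemma~\ref{lem:non-degenerate}, $\lambda_{\min}(\mathbb{H}(0)) \ge \tfrac{3}{4}\lambda_0$; (ii) a bound showing that the small initialization scale $\kappa$ forces $\|\mathbf{u}(0)\|_2 = O(\epsilon)$, so $\mathbf{r}(0) \approx -\mathbf{y}$; (iii) an induction on $k$ proving that the first-layer weights stay within a radius $O(\sqrt{n}/(\sqrt{m}\lambda_0)\,\|\mathbf{r}(0)\|)$ of initialization, that consequently only an $O(n/\sqrt{m})$ fraction of neurons ever flip their activation pattern on the training inputs, and hence that $\|\mathbb{H}(k) - \mathbb{H}^{\infty}\|_2$ and $\|\boldsymbol{\zeta}(k)\|_2$ remain small and $\lambda_{\min}(\mathbb{H}(k)) \ge \tfrac12\lambda_0$ throughout; (iv) with $\|\mathbb{I}-\eta\mathbb{H}^{\infty}\|_2 < 1$ guaranteed by $\eta = O(\lambda_0/n^2)$, closing the induction to obtain linear contraction of the residual.

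Having pinned the residual to the clean recursion, I would then diagonalize: writing $\mathbb{H}^{\infty} = \sum_i \lambda_i \mathbf{v}_i\mathbf{v}_i^{\top}$ and projecting $\mathbf{r}(0) = -\mathbf{y}$ onto the eigenbasis gives, for the unperturbed dynamics, $\mathbf{r}(k) = -\sum_i (1-\eta\lambda_i)^k (\mathbf{v}_i^{\top}\mathbf{y})\mathbf{v}_i$, whence $\|\mathbf{r}(k)\|_2^2 = \sum_i (1-\eta\lambda_i)^{2k}(\mathbf{v}_i^{\top}\mathbf{y})^2$; reintroducing the accumulated perturbations from steps (ii)--(iii) replaces the equality by the two-sided $\pm\epsilon$ bound in the stated square-root form.

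The main obstacle is step (iii): obtaining a bound on $\|\mathbb{H}(k)-\mathbb{H}^{\infty}\|_2$ and on the accumulated perturbation that holds \emph{uniformly over all iterations} $k$ and is quantitatively smaller than the target tolerance $\epsilon$. This is what forces the heavy over-parameterization $m=\Omega(n^7/(\lambda_0^4\kappa^2\delta^2\epsilon^2))$: each geometric factor (weight movement, fraction of flipped neurons, kernel perturbation, and the summation over the $k \lesssim \tfrac{1}{\eta\lambda_0}$ effective steps) contributes polynomial powers of $n,\lambda_0^{-1},\epsilon^{-1}$ that must be absorbed, and the induction must be arranged so that the very bound being proved is not violated at step $k+1$ — a self-referential coupling that is the delicate heart of the argument.
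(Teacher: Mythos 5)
The paper does not actually prove this lemma---it is imported verbatim as a known result from \cite{arora2019fine} (its ``proof'' is the citation)---and your sketch faithfully reconstructs the NTK-style argument by which that cited work establishes it: concentration of the time-varying empirical Gram matrix $\mathbb{H}(k)$ around $\mathbb{H}^{\infty}$, the self-referential induction coupling weight movement, activation-pattern flips, and kernel drift under the stated over-parameterization, and finally diagonalization of the residual recursion in the eigenbasis of $\mathbb{H}^{\infty}$. Your reading of $L(\mathbb{W}_k;\mathbb{X})$ as the residual norm $\|\mathbf{y}-\mathbf{u}(k)\|_2$ (rather than the squared loss defined earlier in the appendix) is also the correct interpretation, matching Theorem 4.1 of the cited paper and quietly fixing a notational inconsistency in the paper's restatement.
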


\begin{lemma}
\label{ref_test_theo}
\textnormal{\cite{arora2019fine}}  
Fix \(\delta \in (0,1)\). Under Assumption~\ref{assum:reference}, suppose \(\kappa = O\left(\frac{\lambda_0 \delta}{n}\right)\), and let \(m \geq \kappa^{-2} \cdot \mathrm{poly}(n, \lambda_0^{-1}, \delta^{-1})\). Then, for a two-layer neural network trained via gradient descent for at least \(k \geq \Omega\left(\frac{1}{\eta \lambda_0} \log \frac{n}{\delta} \right)\) iterations, the following bound on the test error holds with probability at least \(1 - \delta\):
\begin{equation}
L_{\mathcal{D}}(\mathbb{W}_k) \leq \sqrt{\frac{2 \mathbf{y}^{\top} \left(\mathbb{H}^{\infty}\right)^{-1} \mathbf{y}}{n}} + O\left( \sqrt{\frac{\log \frac{n}{\lambda_0 \delta}}{n}} \right).
\end{equation}
\end{lemma}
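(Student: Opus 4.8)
The plan is to prove this generalization bound through the neural tangent kernel (NTK) viewpoint: under the stated overparameterization, gradient descent on the nonlinear two-layer ReLU network behaves almost identically to a linear predictor in the fixed feature map whose Gram matrix is $\mathbb{H}^\infty$, so the generalization error can be controlled by the Rademacher complexity of the (small) function class that gradient descent actually explores. Concretely, I would (i) establish that the trajectory $\{\mathbb{W}_k\}$ stays in a small ball around initialization and drives the training residual to zero, (ii) show the learned function lies in a norm-ball whose radius is essentially $\sqrt{\mathbf{y}^\top(\mathbb{H}^\infty)^{-1}\mathbf{y}}$, (iii) bound the Rademacher complexity of that ball, and (iv) invoke a standard uniform generalization theorem for a bounded Lipschitz loss. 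This is exactly the route of \cite{arora2019fine}, and I would reuse the same machinery under Assumption~\ref{assum:reference}.

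First, the optimization step. With $m \geq \kappa^{-2}\,\mathrm{poly}(n,\lambda_0^{-1},\delta^{-1})$, I would show that throughout training each weight vector $W_r$ moves by only $O(1/\sqrt{m})$ in the relevant sense, so only a vanishing fraction of ReLU activation patterns flip. This keeps the time-varying empirical Gram matrix $\mathbb{H}(k)$ (whose entries involve the current activation indicators) within $O(\lambda_0)$ of its infinite-width limit $\mathbb{H}^\infty$, and in particular its smallest eigenvalue above $\lambda_0/2$. Writing $u(k) = (f_{\mathbb{W}_k}(X_i))_i$, the residual then obeys an approximately linear recursion $u(k+1) - \mathbf{y} \approx (\mathbb{I} - \eta\mathbb{H}^\infty)(u(k) - \mathbf{y})$, so after $k \geq \Omega\!\left(\frac{1}{\eta\lambda_0}\log\frac{n}{\delta}\right)$ steps the empirical squared loss is negligible, which is the same content as Lemma~\ref{ref_train_theo}.

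Second, the complexity bound. The crucial algebraic fact is that the minimum-norm interpolant in the NTK feature space has squared RKHS norm exactly $\mathbf{y}^\top(\mathbb{H}^\infty)^{-1}\mathbf{y}$; tracking the cumulative weight displacement along the linearized dynamics then shows the function gradient descent produces has NTK norm bounded by $\sqrt{\mathbf{y}^\top(\mathbb{H}^\infty)^{-1}\mathbf{y}}$ up to lower-order $\mathrm{poly}(1/m,\kappa)$ terms. I would define the hypothesis class $\mathcal{F}_B$ of networks within this displacement radius and bound its empirical Rademacher complexity by $O\!\left(\sqrt{\mathbf{y}^\top(\mathbb{H}^\infty)^{-1}\mathbf{y}/n}\right)$ plus negligible terms, using the standard chain of contraction through the $1$-Lipschitz ReLU, a bound on $\|\mathbb{W}-\mathbb{W}(0)\|_F$, and control of the feature norms. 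Feeding this into the uniform Rademacher generalization theorem gives, with probability $\ge 1-\delta$, $L_{\mathcal{D}}(\mathbb{W}_k) \le \hat L(\mathbb{W}_k) + 2\hat{\mathcal{R}}_S(\mathcal{F}_B) + O(\sqrt{\log(1/\delta)/n})$; combining the near-zero empirical loss with the Rademacher estimate yields the leading term $\sqrt{2\mathbf{y}^\top(\mathbb{H}^\infty)^{-1}\mathbf{y}/n}$ and the residual $O(\sqrt{\log\frac{n}{\lambda_0\delta}/n})$, where the $\lambda_0$ inside the logarithm is inherited from the union bound over the non-degeneracy event of Definition~\ref{def:non-degenerate} together with the iteration-count requirement.

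The main obstacle is the coupling argument in the second step: one must show uniformly over all $k$ that the nonlinear dynamics stay $O(1/\sqrt{m})$-close to the idealized kernel dynamics, which simultaneously requires activation-pattern stability, a perturbation bound on $\mathbb{H}(k)-\mathbb{H}^\infty$ controlled by $\lambda_0$, and a conversion of the discrete trajectory into a function-space norm so that the quantity $\sqrt{\mathbf{y}^\top(\mathbb{H}^\infty)^{-1}\mathbf{y}}$ — rather than a crude width-dependent bound — governs the Rademacher complexity. Getting this feature-space norm (not merely the training loss) tightly controlled, so that the complexity term is exactly $\sqrt{2\mathbf{y}^\top(\mathbb{H}^\infty)^{-1}\mathbf{y}/n}$, is the delicate part; everything else is concentration and routine uniform-convergence bookkeeping.
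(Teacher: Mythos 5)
Your proposal is correct and follows essentially the same route as the proof of this lemma, which the paper itself does not reprove but imports verbatim from \cite{arora2019fine}: coupling the gradient-descent trajectory to the linearized NTK dynamics via activation-pattern stability and $\lambda_{\min}(\mathbb{H}(k)) \ge \lambda_0/2$, bounding the total weight displacement by $\sqrt{\mathbf{y}^{\top}(\mathbb{H}^{\infty})^{-1}\mathbf{y}}$ up to lower-order terms, and converting this into the stated bound through the Rademacher complexity of the distance-bounded function class. Your sketch faithfully reconstructs that original argument, so there is nothing to flag.
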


\subsection{Proof of Main Theorem}
\subsubsection{Lemmas Used in the Proof}
\begin{lemma}
\label{lem:non-parallel}
Let \(\mathbb{X} \in \mathbb{R}^{N \times F}\) have non-parallel rows \(\mathbb{X}_i \neq c \mathbb{X}_j\) for \(i \neq j\), \(c \in \mathbb{R}\). Given full-rank \(\tilde{\mathbb{A}} = \mathbb{A} + \mathbb{I} \in \mathbb{R}^{N \times N}\) and \(\tilde{\mathbb{D}} \in \mathbb{R}^{N \times N}\), the aggregated features \(\tilde{\mathbb{X}} = \tilde{\mathbb{D}}^{-\frac{1}{2}} \tilde{\mathbb{A}} \tilde{\mathbb{D}}^{-\frac{1}{2}} \mathbb{X} \in \mathbb{R}^{N \times F}\) satisfy \(\tilde{\mathbb{X}}_i \neq c \tilde{\mathbb{X}}_j\) for \(i \neq j\), \(c \in \mathbb{R}\).
\end{lemma}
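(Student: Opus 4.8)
The plan is to argue by contradiction, reducing a hypothetical parallelism of two aggregated rows to a nontrivial linear relation among the original rows that conflicts with the full-rank structure. First I would set \(M := \tilde{\mathbb{D}}^{-1/2}\tilde{\mathbb{A}}\tilde{\mathbb{D}}^{-1/2}\) and record that \(M\) is invertible and symmetric: the degree matrix \(\tilde{\mathbb{D}}\) is diagonal with strictly positive entries (every \(\tilde{\mathbb{D}}_{ii} = \sum_j \tilde{\mathbb{A}}_{ij} \ge 1\) because of the self-loops \(\tilde{\mathbb{A}}_{ii}=1\)), so \(\tilde{\mathbb{D}}^{-1/2}\) is well defined and invertible, while \(\tilde{\mathbb{A}}\) is invertible by the full-rank hypothesis; hence \(M\), a product of invertible matrices, is itself invertible, with symmetry inherited from \(\tilde{\mathbb{A}}\) and \(\tilde{\mathbb{D}}\). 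Writing the aggregation row-wise then gives \(\tilde{\mathbb{X}}_i = \sum_{k=1}^N M_{ik}\,\mathbb{X}_k\), so each aggregated row is the linear combination of the original rows whose coefficients form the \(i\)-th row \(M_i\) of \(M\).

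Next I would suppose, toward a contradiction, that \(\tilde{\mathbb{X}}_i = c\,\tilde{\mathbb{X}}_j\) for some \(i\neq j\) and \(c\in\mathbb{R}\). Subtracting the two expansions yields \(\sum_{k=1}^N (M_{ik}-cM_{jk})\,\mathbb{X}_k = 0\), a linear relation among the rows of \(\mathbb{X}\) whose coefficient vector is \(w := M_i - c M_j \in \mathbb{R}^N\). Since \(M\) has full rank, any two of its rows are linearly independent, so \(w\neq 0\) for \(i\neq j\): the relation is nontrivial (and the degenerate case \(c=0\) simply gives \(w=M_i\neq 0\) together with \(\tilde{\mathbb{X}}_i=0\)). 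The goal is then to show that such a nontrivial dependence is incompatible with the rows of \(\mathbb{X}\) being pairwise non-parallel, which would close the contradiction and force \(\tilde{\mathbb{X}}_i \neq c\,\tilde{\mathbb{X}}_j\).

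The main obstacle is exactly this last step. A single nontrivial relation \(\sum_k w_k \mathbb{X}_k = 0\) does not, by itself, contradict pairwise non-parallelism: when \(N > F\) (as for every dataset here, e.g.\ \(N=2485 > F=1433\) on \textit{Cora}) the \(N\) rows of \(\mathbb{X}\) necessarily satisfy linear relations yet can remain pairwise non-parallel. The contradiction closes cleanly only when the rows of \(\mathbb{X}\) are linearly independent, i.e.\ \(\mathbb{X}\) has full row rank (requiring \(F\ge N\)): then \(w\neq 0\) is impossible, so one is forced to conclude \(M_i = cM_j\), directly contradicting the full rank of \(M\). To handle the general regime I would therefore either strengthen the hypothesis to full row rank of \(\mathbb{X}\) and run the clean argument, or keep only pairwise non-parallelism and argue generically, showing that for fixed invertible \(M\) the set of feature matrices whose aggregation produces two parallel rows is a proper algebraic (hence measure-zero) subset, so the conclusion holds for almost every admissible \(\mathbb{X}\). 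Identifying which qualification the intended statement actually needs, and carrying the argument through under it, is the crux of the proof.
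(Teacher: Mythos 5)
Your reduction is exactly the paper's own: set $\mathbb{T}=\tilde{\mathbb{D}}^{-1/2}\tilde{\mathbb{A}}\tilde{\mathbb{D}}^{-1/2}$ (your $M$), note it is invertible, suppose $\tilde{\mathbb{X}}_i=c\,\tilde{\mathbb{X}}_j$, and extract the nontrivial relation $(\mathbb{T}_i-c\,\mathbb{T}_j)\,\mathbb{X}=0$ with $\mathbb{T}_i-c\,\mathbb{T}_j\neq 0$ because rows of an invertible matrix are pairwise linearly independent. Where you stop, the paper pushes through with the claim that the columns $\{\mathbf{x}_f\}$ of $\mathbb{X}$ span a subspace of dimension at least two (true, by non-parallelism of the rows), and that ``this contradicts the assumption that a nonzero row vector annihilates every $\mathbf{x}_f$.'' That inference is a non-sequitur, for exactly the reason you articulate: a nonzero vector of $\mathbb{R}^N$ can be orthogonal to a two-dimensional subspace whenever $N\ge 3$; the contradiction is forced only if the columns span all of $\mathbb{R}^N$, i.e.\ if $\operatorname{rank}(\mathbb{X})=N$, which requires $F\ge N$ and linearly independent rows --- a strictly stronger hypothesis than pairwise non-parallelism. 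So your refusal to take that last step is not a gap in your argument; it is a correct diagnosis of a gap in the paper's.

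In fact the lemma as stated is false in the regime $N>F$, which is the regime of every dataset in the paper. Take the path graph on $N=3$ nodes with self-loops and $F=2$:
\begin{equation}
\tilde{\mathbb{A}}=\begin{pmatrix}1&1&0\\ 1&1&1\\ 0&1&1\end{pmatrix},\qquad
\det\tilde{\mathbb{A}}=-1,\qquad
\mathbb{X}_2=(1,0),\quad \mathbb{X}_3=(0,1),\quad
\mathbb{X}_1=\tfrac{2}{\sqrt{6}}\,\mathbb{X}_2+2\,\mathbb{X}_3 .
\end{equation}
The rows of $\mathbb{X}$ are pairwise non-parallel and $\tilde{\mathbb{A}}$ is full rank, yet with degrees $(2,3,2)$ one computes
\begin{equation}
\tilde{\mathbb{X}}_1=\tfrac12\mathbb{X}_1+\tfrac{1}{\sqrt{6}}\mathbb{X}_2=\bigl(\tfrac{2}{\sqrt{6}},\,1\bigr),
\qquad
\tilde{\mathbb{X}}_3=\tfrac{1}{\sqrt{6}}\mathbb{X}_2+\tfrac12\mathbb{X}_3=\bigl(\tfrac{1}{\sqrt{6}},\,\tfrac12\bigr),
\end{equation}
so $\tilde{\mathbb{X}}_1=2\,\tilde{\mathbb{X}}_3$: two aggregated rows are parallel. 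Consequently one of the two repairs you propose is genuinely necessary --- either strengthen the hypothesis to full row rank of $\mathbb{X}$ (restricting to $F\ge N$), or downgrade the conclusion to a genericity/measure-zero statement over $\mathbb{X}$ --- and the paper's proof, which silently assumes the former while claiming only non-parallelism, does not establish the lemma it states. Note that this defect propagates: the lemma is used to guarantee $\lambda_{\min}(\mathbb{H}^\infty)>0$ for the original and edge-deleted graphs, so the downstream non-degeneracy claim inherits the same unstated rank assumption.
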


\begin{proof}
Define the symmetric aggregation matrix
\begin{equation}
\mathbb{T} \;=\; \tilde{\mathbb{D}}^{-\frac12}\,\tilde{\mathbb{A}}\,\tilde{\mathbb{D}}^{-\frac12}\,.
\end{equation}
Since \(\tilde{\mathbb{D}}\) is diagonal with strictly positive entries and \(\tilde{\mathbb{A}}\) is full–rank, \(\mathbb{T}\) is invertible. Hence,
\begin{equation}
\tilde{\mathbb{X}} = \mathbb{T}\,\mathbb{X},
\quad
\tilde{\mathbb{X}}_i = \mathbb{T}_i\,\mathbb{X},
\end{equation}
where \(\mathbb{T}_i\) denotes the \(i\)-th row of \(\mathbb{T}\).

Assume, for the sake of contradiction, that there exist \(i\neq j\) and \(c\in\mathbb{R}\) such that
\begin{equation}
\tilde{\mathbb{X}}_i = c\,\tilde{\mathbb{X}}_j.
\end{equation}
Then
\begin{equation}
\mathbb{T}_i\,\mathbb{X} = c\,\mathbb{T}_j\,\mathbb{X}.
\end{equation}
Writing \(\mathbb{X}=[\mathbf{x}_1,\dots,\mathbf{x}_F]\), we have for each \(f\in[F]\):
\begin{equation}
(\mathbb{T}_i - c\,\mathbb{T}_j)\,\mathbf{x}_f = 0.
\end{equation}
Because \(\mathbb{T}\) is invertible, \(\mathbb{T}_i\) and \(\mathbb{T}_j\) are linearly independent, so \(\mathbb{T}_i - c\,\mathbb{T}_j \neq 0\). Meanwhile, the set \(\{\mathbf{x}_f\}\) spans a subspace of dimension at least two by the non-parallelism of the original rows of \(\mathbb{X}\). This contradicts the assumption that a nonzero row vector annihilates every \(\mathbf{x}_f\). Therefore no such \(c\) exists, and the rows of \(\tilde{\mathbb{X}}\) remain pairwise non-parallel.
\end{proof}

\begin{lemma}[Theorem 3.1 of \cite{du2018gradient}]
\label{theo:non-parallel}
Assume the node feature matrix \(\mathbb{X}\in\mathbb{R}^{N\times F}\) has pairwise non-parallel rows, i.e.\ for all \(i\neq j\) and any scalar \(c\), \(\mathbb{X}_i\neq c\,\mathbb{X}_j\).  Then the Gram matrix \(\mathbb{H}^{\infty}\) satisfies
\begin{equation}
\lambda_0 = \lambda_{\min}(\mathbb{H}^{\infty}) > 0.
\end{equation}
\end{lemma}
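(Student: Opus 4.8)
The plan is to recognize $\mathbb{H}^{\infty}$ as a Gram matrix in an infinite-dimensional Hilbert space and then reduce strict positive definiteness to a linear-independence statement about the associated feature maps. For each row $\mathbb{X}_i$, define the vector-valued feature map $\phi_i(w) = \mathbb{X}_i \,\mathbb{I}\{w^{\top} \mathbb{X}_i \geq 0\}$, viewed as an element of the Hilbert space $L^2(\mathbb{R}^{F}, \mathcal{N}(\mathbf{0}, \mathbb{I}); \mathbb{R}^{F})$ of square-integrable $\mathbb{R}^{F}$-valued functions under the standard Gaussian measure. The defining expectation for $\mathbb{H}_{ij}^{\infty}$ is then exactly the inner product $\langle \phi_i, \phi_j \rangle$, so $\mathbb{H}^{\infty}$ is automatically symmetric and positive semidefinite. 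It therefore remains only to rule out a zero eigenvalue, which is equivalent to the linear independence of $\{\phi_i\}_{i=1}^{N}$ in this Hilbert space.

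To establish linear independence I would argue by contradiction: suppose $\sum_{i=1}^{N} c_i \phi_i = 0$ as an $L^2$ function, i.e.\ $g(w) := \sum_{i=1}^{N} c_i \mathbb{X}_i \,\mathbb{I}\{w^{\top} \mathbb{X}_i \geq 0\} = 0$ for Gaussian-almost-every $w$. The function $g$ is piecewise constant on the cells of the central hyperplane arrangement $\{D_i\}$, where $D_i = \{w : w^{\top} \mathbb{X}_i = 0\}$. The pairwise non-parallelism of the rows guarantees that the hyperplanes $D_i$ are pairwise distinct, so each admits a relatively open subset $R_i = D_i \setminus \bigcup_{j \neq i} D_j$ of full $(F-1)$-dimensional measure that meets no other hyperplane.

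The key geometric step is to isolate one term's contribution by crossing a single hyperplane at a time. Fixing $w_0 \in R_i$, for all $w$ in a small ball around $w_0$ the signs of $w^{\top} \mathbb{X}_j$ are fixed for every $j \neq i$, so only the $i$-th indicator flips as $w$ traverses $D_i$. Since $g$ vanishes on a full-measure set, it vanishes on both full-dimensional cells adjacent to $D_i$ near $w_0$; subtracting the constant values of $g$ on these two cells leaves exactly the jump $c_i \mathbb{X}_i$, forcing $c_i \mathbb{X}_i = \mathbf{0}$. Because the non-parallel assumption (taking the scalar $c = 0$) implies $\mathbb{X}_i \neq \mathbf{0}$, we conclude $c_i = 0$. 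Repeating for every $i$ yields $c_1 = \cdots = c_N = 0$, so the feature maps are linearly independent and hence $\lambda_0 = \lambda_{\min}(\mathbb{H}^{\infty}) > 0$.

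The main obstacle I anticipate is making the ``jump across a single hyperplane'' step measure-theoretically rigorous: the hypothesis only gives $g = 0$ almost everywhere, not pointwise, so I must argue that $g$ is genuinely constant (and zero) on each full-dimensional cell of the arrangement---each cell carrying positive Gaussian measure---and that two cells adjacent across $R_i$ differ by precisely $c_i \mathbb{X}_i$. Verifying that $R_i$ is nonempty and of full $(F-1)$-dimensional measure is where pairwise distinctness of the hyperplanes (a consequence of non-parallelism) is essential. This argument mirrors the structure of Theorem~3.1 in \cite{du2018gradient}, whose proof I would follow in detail.
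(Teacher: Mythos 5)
Your proposal is correct, but note that the paper itself never proves this lemma: it is imported as a black box, namely Theorem~3.1 of \cite{du2018gradient}, and the proof environment that follows it in Appendix~E is actually the proof of Lemma~\ref{lem:non-degenerate}, which \emph{applies} Lemma~\ref{theo:non-parallel} (``Applying Lemma~\ref{theo:non-parallel} to $\tilde{\mathbb{X}}$ yields\ldots'') rather than establishing it. What you have written is a faithful reconstruction of the cited source's argument: identify $\mathbb{H}^{\infty}$ as the Gram matrix of the feature maps $\phi_i(w)=\mathbb{X}_i\,\mathbb{I}\{w^{\top}\mathbb{X}_i\ge 0\}$ in $L^2$ of the Gaussian measure, reduce $\lambda_{\min}(\mathbb{H}^{\infty})>0$ to linear independence of the $\phi_i$, and eliminate each coefficient $c_i$ by crossing the hyperplane $D_i$ at a point lying on no other $D_j$ --- such points exist because pairwise non-parallelism makes the hyperplanes pairwise distinct, so each $D_i\cap D_j$ has codimension two inside $D_i$. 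The measure-theoretic worry you flag resolves exactly as you suggest: on each of the two open half-balls adjacent to $D_i$ the function $g$ is genuinely constant (all indicators are frozen there), and a constant function vanishing on a set of positive measure vanishes identically, so the jump across $D_i$ forces $c_i\mathbb{X}_i=\mathbf{0}$; since the $c=0$ instance of non-parallelism rules out $\mathbb{X}_i=\mathbf{0}$, this gives $c_i=0$. In short, your proposal is sound and supplies an actual proof where the paper relies on a citation.
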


\begin{proof}
By Assumption~\ref{assum}, with probability at least \(1-\delta\) the raw features satisfy 
\begin{equation}
\mathbb{X}_i \neq c\,\mathbb{X}_j
\quad\forall\,i\neq j,\;c\in\mathbb{R}.
\end{equation}
For the original graph, let
\begin{equation}
\mathbb{T} \;=\; \tilde{\mathbb{D}}^{-\frac12}\,\tilde{\mathbb{A}}\,\tilde{\mathbb{D}}^{-\frac12},
\quad
\tilde{\mathbb{X}} = \mathbb{T}\,\mathbb{X}.
\end{equation}
Since \(\tilde{\mathbb{D}}\) and \(\tilde{\mathbb{A}}\) are full rank, \(\mathbb{T}\) is invertible.  Lemma~\ref{lem:non-parallel} then guarantees
\begin{equation}
\tilde{\mathbb{X}}_i \neq c\,\tilde{\mathbb{X}}_j
\quad\forall\,i\neq j,\;c\in\mathbb{R}.
\end{equation}
Applying Lemma~\ref{theo:non-parallel} to \(\tilde{\mathbb{X}}\) yields
\begin{equation}
\lambda_{\min}\bigl(\mathbb{H}^{\infty}\bigr) \;\ge\;\lambda_1 \;>\;0,
\end{equation}
where 
\begin{equation}
\mathbb{H}_{ij}^{\infty}
= \mathbb{E}_{W\sim\mathcal{N}(0,I)}
\bigl[\tilde{\mathbb{X}}_i^{\!\top}\tilde{\mathbb{X}}_j\,
\mathbb{I}\{W^{\top}\tilde{\mathbb{X}}_i\!\ge0,\,
W^{\top}\tilde{\mathbb{X}}_j\!\ge0\}\bigr].
\end{equation}

Next, remove edge \((i,j)\) to obtain
\begin{equation}
\tilde{\mathbb{A}}_{-(i,j)}
= \tilde{\mathbb{A}} - \mathbf{e}_i\mathbf{e}_j^{\!\top} - \mathbf{e}_j\mathbf{e}_i^{\!\top},
\quad
\tilde{\mathbb{D}}_{-(i,j)}
= \mathrm{diag}\bigl(\tilde{\mathbb{A}}_{-(i,j)}\,\mathbf{1}\bigr),
\end{equation}
and define
\begin{equation}
\mathbb{T}_{-(i,j)}
= \tilde{\mathbb{D}}_{-(i,j)}^{-\frac12}\,
  \tilde{\mathbb{A}}_{-(i,j)}\,
  \tilde{\mathbb{D}}_{-(i,j)}^{-\frac12},
\quad
\tilde{\mathbb{X}}_{-(i,j)}
= \mathbb{T}_{-(i,j)}\,\mathbb{X}.
\end{equation}
By the same rank argument, \(\mathbb{T}_{-(i,j)}\) is invertible, so Lemma~\ref{lem:non-parallel} and Lemma~\ref{theo:non-parallel} together imply
\begin{equation}
\lambda_{\min}\bigl(\mathbb{H}^{\infty}_{-(i,j)}\bigr)
\;\ge\;\lambda_{-(i,j)} \;>\;0.
\end{equation}

Finally, set 
\begin{equation}
\lambda_0 \;=\;\min\{\lambda_1,\;\lambda_{-(i,j)}:\;(i,j)\in E\},
\end{equation}
which is strictly positive.  Hence with probability at least \(1-\delta\) both
\(\lambda_{\min}(\mathbb{H}^{\infty})\ge\lambda_0\) and
\(\lambda_{\min}(\mathbb{H}^{\infty}_{-(i,j)})\ge\lambda_0\),
satisfying Definition~\ref{def:non-degenerate}.
\end{proof}


\begin{lemma}
\label{lem:iid-aggregated}
Suppose \(\{(\mathbb{X}_i,y_i)\}_{i=1}^N\) are independent draws from a distribution \(\mathcal{D}_G\) over \(\mathbb{R}^F\times\mathbb{R}\).  Let \(\tilde{\mathbb{A}}\in\{0,1\}^{N\times N}\) be a fixed symmetric adjacency matrix with self‐loops (\(\tilde{\mathbb{A}}_{ii}=1\)) and let \(\tilde{\mathbb{D}}\) be its degree matrix defined by \(\tilde{\mathbb{D}}_{ii}=\sum_j\tilde{\mathbb{A}}_{ij}\).Define
\begin{equation}
\tilde{\mathbb{X}}_i \;=\;\bigl(\tilde{\mathbb{D}}^{-\tfrac12}\,\tilde{\mathbb{A}}\,\tilde{\mathbb{D}}^{-\tfrac12}\,\mathbb{X}\bigr)_i.
\end{equation}
Then \(\{(\tilde{\mathbb{X}}_i,y_i)\}_{i=1}^N\) are also independent samples from \(\mathcal{D}_G\).
\end{lemma}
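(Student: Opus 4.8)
The plan is to reduce the statement to a single property of the fixed aggregation operator $\mathbb{T} = \tilde{\mathbb{D}}^{-1/2}\tilde{\mathbb{A}}\tilde{\mathbb{D}}^{-1/2}$ and to treat the feature part and the label part separately. First I would observe that aggregation acts only on the features: the labels $\{y_i\}_{i=1}^N$ are left exactly as the original i.i.d. draw, so the claim reduces entirely to understanding the joint law of $\tilde{\mathbb{X}} = \mathbb{T}\mathbb{X}$ and its coupling to $\mathbf{y}$. Next, exactly as in the proof of Lemma~\ref{lem:non-parallel}, I would record that $\tilde{\mathbb{D}}$ is diagonal with strictly positive entries and $\tilde{\mathbb{A}}$ is full rank, so $\mathbb{T}$ is a \emph{deterministic} (graph-determined, non-random) invertible matrix. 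Consequently the joint distribution of $\tilde{\mathbb{X}}$ is simply the pushforward of the product distribution of $\mathbb{X}$ under the fixed linear bijection $\mathbb{X}\mapsto\mathbb{T}\mathbb{X}$, and each aggregated row is the fixed linear combination $\tilde{\mathbb{X}}_i = \sum_j \mathbb{T}_{ij}\,\mathbb{X}_j$.

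With this reduction, the statement splits into two substantive claims. For the \emph{identically distributed} part, I would try to show that every $\tilde{\mathbb{X}}_i$ has the same marginal law; since $\tilde{\mathbb{X}}_i$ is a $\mathbb{T}$-weighted mixture of the $\mathbb{X}_j$, this would require the row-weight patterns $\{\mathbb{T}_{i\cdot}\}_i$ to be interchangeable across $i$ (for instance under a regularity or vertex-transitivity symmetry of $\tilde{\mathbb{A}}$) or the per-node feature distribution to be closed under such mixtures, so I would look to import one of these structural properties. For the \emph{independence} part, the natural route is to exploit that $\mathbb{X}$ has a product law and that $\mathbb{T}$ is a bijection, then argue that the coordinates of the image remain mutually independent.

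The step I expect to be the genuine obstacle is precisely this independence claim, and I do not believe it follows from the linear-algebra facts alone. Whenever two nodes $i\neq k$ share a common neighbor $j$ (which, with the self-loops $\tilde{\mathbb{A}}_{ii}=1$ and any connected structure, is generic), both $\tilde{\mathbb{X}}_i$ and $\tilde{\mathbb{X}}_k$ contain the common summand involving $\mathbb{X}_j$, so the aggregated rows are statistically coupled; a linear mixing of i.i.d. vectors is an independent collection only when the mixing matrix is (up to scaling) a permutation, i.e.\ when $\mathbb{T}$ performs no real aggregation. I would therefore expect a rigorous argument to hinge on either an additional exchangeability/conditioning hypothesis on $\tilde{\mathbb{A}}$ that I would have to state explicitly, or a weakening of the conclusion (for example to identical marginals, or to a conditional-independence statement given the graph), rather than full joint independence of the $\{(\tilde{\mathbb{X}}_i,y_i)\}$. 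Clarifying exactly which of these is intended, and under what assumption on $\tilde{\mathbb{A}}$ it holds, is the crux of the lemma and where I would concentrate the effort.
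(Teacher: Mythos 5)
Your analysis is correct, and the obstacle you isolated is not one you failed to overcome --- it is one the paper's own proof fails to overcome. The paper's argument is precisely the ``linear-algebra only'' route you warned against: it writes $\tilde{\mathbb{X}}_i=\sum_{j}A_{ij}\mathbb{X}_j$ with $A_{ij}$ the fixed entries of $\tilde{\mathbb{D}}^{-1/2}\tilde{\mathbb{A}}\tilde{\mathbb{D}}^{-1/2}$, and then asserts (a) that the $\tilde{\mathbb{X}}_i$ are identically distributed because the $\mathbb{X}_j$ are, and (b) that $\tilde{\mathbb{X}}_i \perp \tilde{\mathbb{X}}_k$ for $i\neq k$ ``because the $\mathbb{X}_j$ are mutually independent and the $A_{ij}$ are constants.'' Both assertions are non sequiturs. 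For (b), your common-summand objection is decisive: any two adjacent nodes $i,k$ both place nonzero weight on $\mathbb{X}_i$ and on $\mathbb{X}_k$ (self-loops give $A_{ii},A_{kk}>0$ and adjacency gives $A_{ik},A_{ki}>0$), so the two aggregated rows are functions of overlapping sets of independent variables and are dependent except in degenerate cases; indeed, by the Darmois--Skitovich theorem, independence of two linear forms sharing a coordinate with nonzero coefficients would force that shared coordinate to be Gaussian, and no such distributional assumption is made (nor are the relevant rows of the normalized adjacency orthogonal even in the Gaussian case). For (a), distinct rows of $A$ have different supports and different weights $(\tilde{\mathbb{D}}_{ii}\tilde{\mathbb{D}}_{jj})^{-1/2}$, so the mixtures $\sum_j A_{ij}\mathbb{X}_j$ need not share a common law without a symmetry hypothesis (e.g.\ vertex-transitivity) of exactly the kind you said would have to be imported.

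Note also that the lemma's conclusion is even stronger than joint i.i.d.-ness: it claims the pairs $(\tilde{\mathbb{X}}_i,y_i)$ are samples \emph{from $\mathcal{D}_G$ itself}, i.e.\ that aggregation leaves the joint feature--label marginal unchanged. This already fails for a single node of degree at least two, and it additionally distorts the coupling between features and labels, since $\tilde{\mathbb{X}}_i$ mixes in neighbors' features that are independent of $y_i$. So your verdict is the right one: as stated the lemma is false, and a correct version requires either extra structural hypotheses on $\tilde{\mathbb{A}}$ and the feature law, or a weakened (e.g.\ conditional or exchangeability-type) conclusion. Since this lemma is what licenses transferring the i.i.d.-based bounds of Lemmas E.9 and E.10 to the graph setting, the gap propagates to the proofs of Theorems E.4 and E.5 as written.
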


\begin{proof}
Let $\mathbf{A} \;=\;\tilde{\mathbb{D}}^{-1/2}\,\tilde{\mathbb{A}}\,\tilde{\mathbb{D}}^{-1/2}$ and write $\mathbb{X}=(\mathbb{X}_1,\dots,\mathbb{X}_N)$.  Then for each $i$,
\begin{equation}
\tilde{\mathbb{X}}_i \;=\;(\mathbf{A}\,\mathbb{X})_i
\;=\;\sum_{j=1}^N A_{ij}\,\mathbb{X}_j.
\end{equation}
Since the pairs $(\mathbb{X}_j,y_j)$ are i.i.d., each $\mathbb{X}_j$ has the same law and is independent of the fixed coefficients $A_{ij}$.  Hence the $\tilde{\mathbb{X}}_i$ are identically distributed, and the labels $y_i$ are likewise i.i.d., so the pairs $(\tilde{\mathbb{X}}_i,y_i)$ are identically distributed.

Next, for $i\neq k$,
\begin{equation}
\tilde{\mathbb{X}}_i = \sum_{j=1}^N A_{ij}\,\mathbb{X}_j,
\quad
\tilde{\mathbb{X}}_k = \sum_{j=1}^N A_{kj}\,\mathbb{X}_j.
\end{equation}
Because the $\mathbb{X}_j$ are mutually independent and the $A_{ij}$ are constants, it follows that $\tilde{\mathbb{X}}_i\perp \tilde{\mathbb{X}}_k$.  Also, $y_i$ and $y_k$ are independent of each other and of all the $\mathbb{X}_j$.  Therefore
\begin{equation}
(\tilde{\mathbb{X}}_i,y_i)\;\perp\;(\tilde{\mathbb{X}}_k,y_k)
\quad\text{for }i\neq k.
\end{equation}
Combining identical distribution with mutual independence shows that $\{(\tilde{\mathbb{X}}_i,y_i)\}_{i=1}^N$ are i.i.d.
\end{proof}

\subsubsection{Proof of Theorem \ref{train_theo_fo} and Theorem \ref{test_theo_fo}}

\begin{proof}[Proof of Theorem E.4]
Define $\mathbf{A} = \tilde{\mathbb{D}}^{-1/2}\,\tilde{\mathbb{A}}\,\tilde{\mathbb{D}}^{-1/2}$ and $\tilde{\mathbb{X}} = \mathbf{A}\,\mathbb{X}$.  Moreover,
\begin{equation}\label{eq:fGNN-E4}
f_{\mathrm{GNN}}(\mathbb{X},\tilde{\mathbb{A}},\tilde{\mathbb{D}})_i
= \frac{1}{\sqrt m}\sum_{r=1}^m a_r\,\sigma\bigl(W_r^\top\tilde{\mathbb{X}}_i\bigr),
\end{equation}
and the training loss is
\begin{equation}\label{eq:loss-E4}
L(\mathbb{W}_k;\mathbb{X},\tilde{\mathbb{A}},\tilde{\mathbb{D}})
=\tfrac12\sum_{i=1}^N\bigl(y_i - f_{\mathrm{GNN}}(\mathbb{X},\tilde{\mathbb{A}},\tilde{\mathbb{D}})_i\bigr)^2.
\end{equation}

By Lemma \ref{lem:iid-aggregated}, the pairs $(\tilde{\mathbb{X}}_i,y_i)$ are independent with identical marginals.  Relabeling $\tilde{\mathbb{X}}_i\mapsto X_i$, $\mathcal{D}_G\mapsto\mathcal{D}$ and $N=n$, Lemma \ref{lem:non-degenerate} gives
\begin{equation}\label{eq:Hmin-E4}
\lambda_{\min}(\mathbb{H}^\infty)\;\ge\;\lambda_0>0,
\end{equation}
and Assumption \ref{assum} (unit‐norm, edge‐robust non‐degeneracy) matches Assumption \ref{assum:reference}.

Choose
\begin{equation}\label{eq:params-E4}
\kappa = O\Bigl(\frac{\epsilon\,\delta}{\sqrt N}\Bigr),\quad
m = \Omega\Bigl(\frac{N^7}{\lambda_0^4\,\kappa^2\,\delta^4\,\epsilon^2}\Bigr),\quad
\eta = O\Bigl(\frac{\lambda_0}{N^2}\Bigr).
\end{equation}
Then by Lemma \ref{ref_train_theo}, with probability at least $1-\delta$,
\begin{equation}\label{eq:train-E4}
L(\mathbb{W}_k;\mathbb{X},\tilde{\mathbb{A}},\tilde{\mathbb{D}})
= \sqrt{\sum_{i=1}^N (1-\eta\lambda_i)^{2k}\,(\mathbf{v}_i^\top\mathbf{y})^2}\;\pm\;\epsilon.
\end{equation}
\end{proof}

\begin{proof}[Proof of Theorem E.5]
Fix $\delta\in(0,1)$.  Under Assumption E.2 let
\[
\kappa = O\Bigl(\frac{\lambda_0\,\delta}{N}\Bigr),
\qquad
m \ge \kappa^{-2}\,\mathrm{poly}(N,\lambda_0^{-1},\delta^{-1}),
\]
and take any
\[
k \ge \Omega\!\Bigl(\tfrac{1}{\eta\lambda_0}\log\tfrac{N}{\delta}\Bigr).
\]
By Lemma \ref{ref_test_theo}, with probability at least $1-\delta$, we obtain the bound
\begin{equation}\label{eq:test-E5}
L_{\mathcal{D}}(\mathbb{W}_k)
\le \sqrt{\frac{2\,\mathbf{y}^\top(\mathbb{H}^\infty)^{-1}\mathbf{y}}{n}}
\;+\;
O\!\Bigl(\sqrt{\tfrac{\log\!\bigl(N/(\lambda_0\delta)\bigr)}{n}}\Bigr).
\end{equation}
\end{proof}

\subsubsection{Proof of Corollary \ref{edge_theo_fo}}
\begin{proof}
By Assumption \ref{assum} and Lemma \ref{lem:non-degenerate}, with probability at least $1-\delta$,
\begin{equation}\label{eq:Hmin-minus}
\lambda_{\min}(\mathbb{H}^\infty)\;\ge\;\lambda_0
\quad\text{and}\quad
\lambda_{\min}\bigl(\mathbb{H}^\infty_{-(i,j)}\bigr)\;\ge\;\lambda_0.
\end{equation}
Applying Theorem \ref{test_theo_fo} to the subgraph $G_{-(i,j)}$ (whose Gram matrix is $\mathbb{H}^\infty_{-(i,j)}$), under
\[
\kappa = O\!\Bigl(\tfrac{\lambda_0\,\delta}{N}\Bigr), 
\quad
m \ge \kappa^{-2}\,\mathrm{poly}(N,\lambda_0^{-1},\delta^{-1}), 
\quad
t \ge \Omega\!\Bigl(\tfrac{1}{\eta\,\lambda_0}\log\tfrac{N}{\delta}\Bigr),
\]
we get with probability at least $1-\delta$,
\begin{equation}\label{eq:test-minus}
L_{\mathcal{D}_{G_{-(i,j)}}}(\mathbb{W}_t)
\le
\sqrt{\mathrm{GKC}\bigl(\mathbb{H}^\infty_{-(i,j)}\bigr)}
\;+\;
O\!\Bigl(\sqrt{\tfrac{\log\bigl(N/(\lambda_0\delta)\bigr)}{N}}\Bigr),
\end{equation}
where
\[
\mathrm{GKC}\bigl(\mathbb{H}^\infty_{-(i,j)}\bigr)
=\frac{2\,\mathbf{y}^\top\bigl(\mathbb{H}^\infty_{-(i,j)}\bigr)^{-1}\mathbf{y}}{N}.
\]
Set
\[
a = \mathrm{GKC}\bigl(\mathbb{H}^\infty_{-(i,j)}\bigr), 
\quad
b = \mathrm{GKC}(\mathbb{H}^\infty)
=\frac{2\,\mathbf{y}^\top(\mathbb{H}^\infty)^{-1}\mathbf{y}}{N},
\quad
c = \mathrm{KC}(i,j)
=\lvert a-b\rvert.
\]
Since $a,b,c\ge0$ and $\sqrt{a}\le\sqrt{b}+\sqrt{c}$, it follows that with probability at least $1-\delta$,
\begin{equation}\label{eq:final-minus}
L_{\mathcal{D}_{G_{-(i,j)}}}(\mathbb{W}_t)
\le
\sqrt{\mathrm{GKC}(\mathbb{H}^\infty)}
\;+\;
\sqrt{\mathrm{KC}(i,j)}
\;+\;
O\!\Bigl(\sqrt{\tfrac{\log\bigl(N/(\lambda_0\delta)\bigr)}{N}}\Bigr).
\end{equation}
\end{proof}

\section{Social Impact}
This work contributes to improving the robustness and reliability of Graph Neural Networks (GNNs), which are widely used in sensitive applications such as fraud detection, drug discovery, recommendation systems, and social network analysis. By introducing a principled, training-free defense framework—Kernel Complexity-based Edge Sanitization (KCES)—that removes structurally harmful edges using a novel Graph Kernel Complexity (GKC) measure, the proposed method enhances generalization without relying on model-specific assumptions. Its architecture-agnostic and data-driven nature makes it broadly applicable with minimal overhead, potentially increasing trust in GNN systems deployed in real-world, high-stakes environments. Additionally, by offering theoretical insight into which graph structures support or hinder generalization, this work may also aid in interpretability and scientific analysis.

However, the ability to selectively prune influential edges could be misused, for example, to censor specific subgraphs or manipulate network structures for malicious purposes, particularly in social or political domains. Moreover, KCES assumes the underlying graph data is reliable; in practice, noise or bias in the input graph could lead to the removal of valuable or fair connections, thereby degrading model performance or fairness. Lastly, while KCES improves robustness under certain adversarial conditions, it does not offer complete security and may induce a false sense of safety if used without complementary safeguards. Ethical considerations should guide the deployment of this technology, especially in sensitive or dynamic environments.

\end{document}